\newcommand{\union}{\cup}
\newcommand{\intersect}{\cap}
\newcommand{\sementails}{\vdash}
\newcommand{\synentails}{\vDash}
\newcommand{\nsementails}{\nvdash}
\newcommand{\I}{\mathcal{I}}
\newcommand{\signature}{\Sigma}
\newcommand{\signatureC}{\Sigma_C}
\newcommand{\signatureI}{\Sigma_I}
\newcommand{\signatureOP}{\Sigma_\mathit{OP}}
\DeclareMathOperator{\test}{test}
\newcommand{\oent}[1]{\mathtt{#1}}
\newcommand{\oaxiom}[1]{\mathrm{#1}}
\newcommand{\testresult}[1]{\mathrm{#1}}
\newcommand{\entailed}{\testresult{entailed}}
\newcommand{\inconsistent}{\testresult{inconsistent}}
\newcommand{\incoherent}{\testresult{incoherent}}
\newcommand{\absent}{\testresult{absent}}
\algnewcommand{\Input}[1]{\item[\textbf{Input:}] \parbox[t]{\linewidth-1cm}{#1}\vspace{3pt}}
\algnewcommand{\And}{\textbf{and}\xspace}
\algnewcommand{\Or}{\textbf{or}\xspace}
\algnewcommand{\Not}{\textbf{not} }
\algnewcommand{\To}{\textbf{to}\xspace}
\let\oldtextproc\textproc
\renewcommand{\textproc}[1]{\nohyphens{\oldtextproc{#1}}}
\newtheorem{definition}{Definition}
\newtheorem{lemma}{Lemma}
\newtheorem{proposition}{Proposition}
\newtheorem{theorem}{Theorem}
\journal{...}
\begin{document}

\begin{frontmatter}



\title{More Effective Ontology Authoring with Test-Driven Development}


   \author[cmk]{C. Maria Keet}
         \author[kd]{Kieren Davies}
    \author[al]{Agnieszka \L{}awrynowicz}

      \address[cmk]{Department of Computer Science, University of Cape Town, South Africa, mkeet@cs.uct.ac.za}
\address[kd]{Department of Computer Science, University of Cape Town, South Africa, kdavies@cs.uct.ac.za}
   \address[al]{Faculty of Computing, Poznan University of Technology, Poland, agnieszka.lawrynowicz@cs.put.poznan.pl}

%

\begin{abstract}
Ontology authoring is a complex process, where commonly the automated reasoner is invoked for verification of newly introduced changes, therewith amounting to a time-consuming test-last approach. 
Test-Driven Development (TDD) for ontology authoring is a recent {\em test-first} approach that aims to reduce authoring time and increase authoring efficiency. Current TDD testing falls short on coverage of OWL features and possible test outcomes, the rigorous foundation thereof, and evaluations to ascertain its effectiveness. 
 We aim to address these issues in one instantiation of TDD for ontology authoring.
We first propose a succinct, logic-based model of TDD testing and present novel TDD algorithms so as to cover also any OWL 2 class expression for the TBox and for the principal ABox assertions, and prove their correctness. The algorithms use methods from the OWL API directly such that reclassification is not necessary for test execution, therewith reducing ontology authoring time. The algorithms were implemented in TDDonto2, a Prot\'eg\'e plugin. 
TDDonto2 was evaluated on editing efficiency and by users. The editing efficiency study demonstrated that it is faster than a typical ontology authoring interface, especially for medium size and large ontologies. The user evaluation demonstrated that modellers make significantly less errors with TDDonto2 compared to the standard Prot\'eg\'e interface and complete their tasks better using less time.
Thus, the results indicate that Test-Driven Development is a promising approach in an ontology development methodology.
\end{abstract}

\begin{keyword}
Ontology Engineering \sep Test-Driven Development \sep OWL


\end{keyword}

\end{frontmatter}


\section{Introduction}
\label{sec:intro}

Ontology engineering is facilitated by methods and methodologies, and tooling support for them. The methodologies are mostly information system-like, high-level directions, such as variants on waterfall and lifecycle development \cite{Garcia10,Suarez08}, although more recently, notions of Agile development are being ported to the ontology development setting, e.g., \cite{Blomqvist12,Peroni17}, including testing in some form \cite{Ferre12,GarcaRamos09,Vrandecic06,Warrender15}. Now that most automated reasoners for OWL have become stable and reliable over the years, new methods have been devised that use the reasoner creatively in the support of the ontology authoring process. Notably, the OWL reasoner can also be used for  examining negations \cite{Ferre16,Ferre12}, checking the changes in entailments after an ontology edit \cite{Denaux12,Matentzoglu16}, and proposing compatible object properties for any two classes \cite{KKG13forza}. 

Such tools are motivated at least in part by the time-consuming trial-and-error  authoring process, i.e., where a modeller checks consistency after each edit \cite{Vigo14}. However, aforementioned methods and tools still require classification for each assessment step, which is unsustainable for large or complex ontologies due to prohibitively long classification times. Effectively, these modellers take a {\em test-last} approach to ontology authoring. 
In this respect, ontology engineering methodologies still lag behind software engineering methodologies both with respect to maturity and adoption \cite{Iqbal:Methodologies}.

There are a few recent attempts at explicitly incorporating automated testing with a {\em test-first} approach \cite{KL16,Warrender15}, which is common in software engineering and known under the banner of {\em test-driven development} (TDD) \cite{Beck04}. To the best of our knowledge, there are three tools for TDD unit testing of ontologies \cite{KL16,Warrender15,Scone:Bitbucket} such that one can check whether an axiom is entailed before adding it. They exhibit two shortcomings that prevent potential for wider uptake: 1) certain axioms expressible in OWL 2 DL \cite{OWL2rec} are not supported as TDD tests, such as $\forall R.C \sqsubseteq D$, and 2) the outcome of a test can be only ``pass'' or ``fail'' with no further information about the nature of failure. 
Further, there has been no rigorous theoretical analysis of the techniques used for such ontology testing that avails of the automated reasoner pre-emptively. 
Yet, for modellers to be able to rely on reasoner-driven TDD for ontology authoring---as they do with test-last ontology authoring---such a theoretical foundation would be needed.

In this paper, we aim to fill this gap in rigour and coverage. We first propose a succinct logic-based model of TDD unit testing as a prerequisite. Subsequently, we generalise the piecemeal algorithms of \cite{KL16} to cover also {\em any} OWL 2 class expression in the axiom under test for not only the TBox, as in \cite{KL16}, but also for the principal ABox assertions, and prove their correctness. These  algorithms do not require reclassification of an ontology in any test after a first single classification before executing one or more TDD tests, and are such that the algorithms are compliant with any OWL 2 compliant reasoner. This is feasible through bypassing the ontology editor functionality (`pressing the reasoner button') and availing directly of a set of methods available from the OWL reasoners in a carefully orchestrated way. We have implemented the algorithms by extending one of the three TDD tools for ontologies, TDDOnto \cite{KL16}, into TDDonto2---also a Prot\'eg\'e 5 plugin---as a proof-of-concept to ascertain their correct functioning practically \cite{DKL17}\footnote{This open source plugin is accessible at \url{https://github.com/kierendavies/tddonto2}}. 
This implementation was subsequently used in two evaluations. First, we devised a human-independent editing efficiency approach examining clicks, keystrokes, and reasoner invocation to compare the test-last with the basic Prot\'eg\'e 5 interface to the test-first with TDDonto2. TDDonto2 has a higher editing efficiency, i.e., takes less time, than the basic interface with a small difference for very small ontologies and substantially for medium to large ontologies. Second, we conducted a typical user evaluation to compare the basic Prot\'eg\'e interface to TDDonto2, which demonstrated that the modellers completed a larger part of the tasks with fewer mistakes in less time. Thus, TDD with the test-first approach and TDDonto2  is more effective than the common (test-last with Prot\'eg\'e) authoring approach.

In the remainder of the paper, we first provide motivations for why testing is applicable to ontologies (Section \ref{sec:rationale}), and describe the main requirements. Section \ref{sec:relworks} describes related work. The first part of the main contributions are presented in Section \ref{sec:foundation}, which is the  model for testing and the main novel algorithms. 
The evaluations of its implementation are presented in Section~\ref{sec:eval}. Section \ref{sec:disc} discusses the work and we conclude in Section \ref{sec:concl}.

\section{Motivations and requirements for a test-first approach}
\label{sec:scenario} \label{sec:rationale}

Test-Driven Development (TDD) in software engineering \cite{Beck04} is a methodology based on 
two rules:
  1) write new code only if an automated test has failed, and
  2) eliminate duplication.
This induces 
a ``red--green--refactor'' pattern of development: first write a new test which fails, then write code which makes it pass with minimal effort, then remove resultant duplication and restructure if necessary.
Tests thus serve to define desired functionality.
The process is usually facilitated with a test harness that runs tests automatically and generates reports.
TDD has been shown to improve code quality \cite{Rafique:TDD}, especially in complex projects, and it is also shown to improve productivity \cite{Janzen06}.
In light of this, TDD may also be used for ontology development. In the next two subsections we first motivate where in the ontology authoring process tests could be useful and subsequently look at  the broader picture of the ontology development processes.

\subsection{TDD tests for ontologies}

Ontologies, like computer programs, can become complex so that it is difficult for a human author to predict the consequences of changes. 
Automated tests are therefore useful to detect unintended consequences.
As an illustrative example, suppose an author creates the following classes and subsumptions:
$\oent{Giraffe} \sqsubseteq \oent{Herbivore} \sqsubseteq \oent{Mammal} \sqsubseteq \oent{Animal}$. 
The author then realises that not all herbivores are mammals, so shortens the hierarchy to $\oent{Herbivore \sqsubseteq Animal}$, thereby losing the $\oent{Giraffe} \sqsubseteq \oent{Mammal}$ derivation. An application that uses this ontology to retrieve mammals would then erroneously exclude giraffes.
This issue can be caught by a simple automated test to check whether $\oent{Giraffe \sqsubseteq Mammal}$ is still entailed.
Superficially, it may seem like this problem can be solved by just adding those axioms directly to the ontology.
However, adding such axioms introduces a lot of redundancy, making modification of the ontology more difficult. 
Adding only a test instead ensures correctness without bloating the ontology. In addition, if it is specified as a test and documented as such in a testing environment, one can easily repeatedly re-run tests. 

Tests may also be used outside an automated test suite in order  
to explore and understand an ontology.
For example, an author might be assessing an ontology of animals for reuse and wants to verify that $\oent{Giraffe} \sqsubseteq \oent{Mammal}$ is entailed in the ontology.
The author can simply create a corresponding temporary test and observe the result, saving the time it would take to browse the inferred class hierarchy in an ontology development environment such as Prot\'eg\'e.

A similar approach can be employed when developing a new ontology: create a temporary test to determine whether the axiom i) is already entailed, ii) would result in a contradiction or unsatisfiable class if it were to be added to the ontology, or iii) can be added safely. For instance, one may wish to check whether one can add the domain declaration $\exists \oent{eats}^-.\top \sqsubseteq \oent{Animal}$ to the ontology such that it would not cause itself or another class to become unsatisfiable.  
In this case, it would be just executing this one TDD test for this axiom. Compare that with laborious and time-consuming alternatives: 1) browsing to the object property declarations and/or clicking through all class axioms of classes that are not animals and manually inspect each one on whether they happen to participate in an axiom involving $\oent{eats}$, or 2) the standard approach of adding an axiom, running the reasoner, and then observing the consequences. 
Option 1 may be feasible for a small toy ontology consisting of a few classes, but not for typical ontologies, let alone large ontologies, and one probably still adds option 2 to it. Option 2 involves reclassification, which may be slow, and which a TDD test can avoid once the ontology is classified. 

This gives us two broad use cases:
\begin{enumerate}
  \item
  \label{enum:usecase:regression}
  Declare many tests alongside an ontology and evaluate them all together in order to demonstrate quality or detect regressions.

  \item
  \label{enum:usecase:temp}
  Evaluate temporary tests as needed in order to explore an ontology or predict the consequences of adding a new axiom.
\end{enumerate}
To satisfy both of these, a hard requirement is that tests must not reclassify the ontology, and they must produce results that identify the consequences of adding an axiom.

\subsection{Tests within the development process}

The previous examples assume there is an axiom to add, but that has to come from some place and more activities are going on when testing. We identified three scenarios for the former and illustrated an informal lifecycle process for the latter in \cite{KL16}. For instance, i) the axiom may be a formalisation of a competency question formulated by domain experts, ii) a modeller may work with a template for axioms, a spreadsheet or an ontology design pattern for one or more axioms, or iii) a knowledge engineer may know already which axiom to add. Resolving this aspect is beyond the current scope, as even within the TDD part, multiple options already are possible even with the straightforward fail-pass character. This is illustrated in Figure~\ref{fig:tddontoIdea} for a more detailed version of the TDD test cycle within the larger TDD lifecycle for the simplest base case. TDD tests for ontologies currently only pass or fail, but, cf. TDD tests in software development, one has to deal with classifying the ontology and handling any inconsistency or unsatisfiable classes as well. The reason that that classification step is present is because when a test evaluates to a `fail', one does not know whether that axiom is absent just because the knowledge is missing or because it would result to an inconsistency if it were to be added, so one would have to check that anyway after all. Ideally, one would want to know {\em before} any edit {\em why} the axiom is not in the ontology, so as to be better informed about the next step(s) to take.

\begin{figure}[t]
\includegraphics[width=0.48\textwidth]{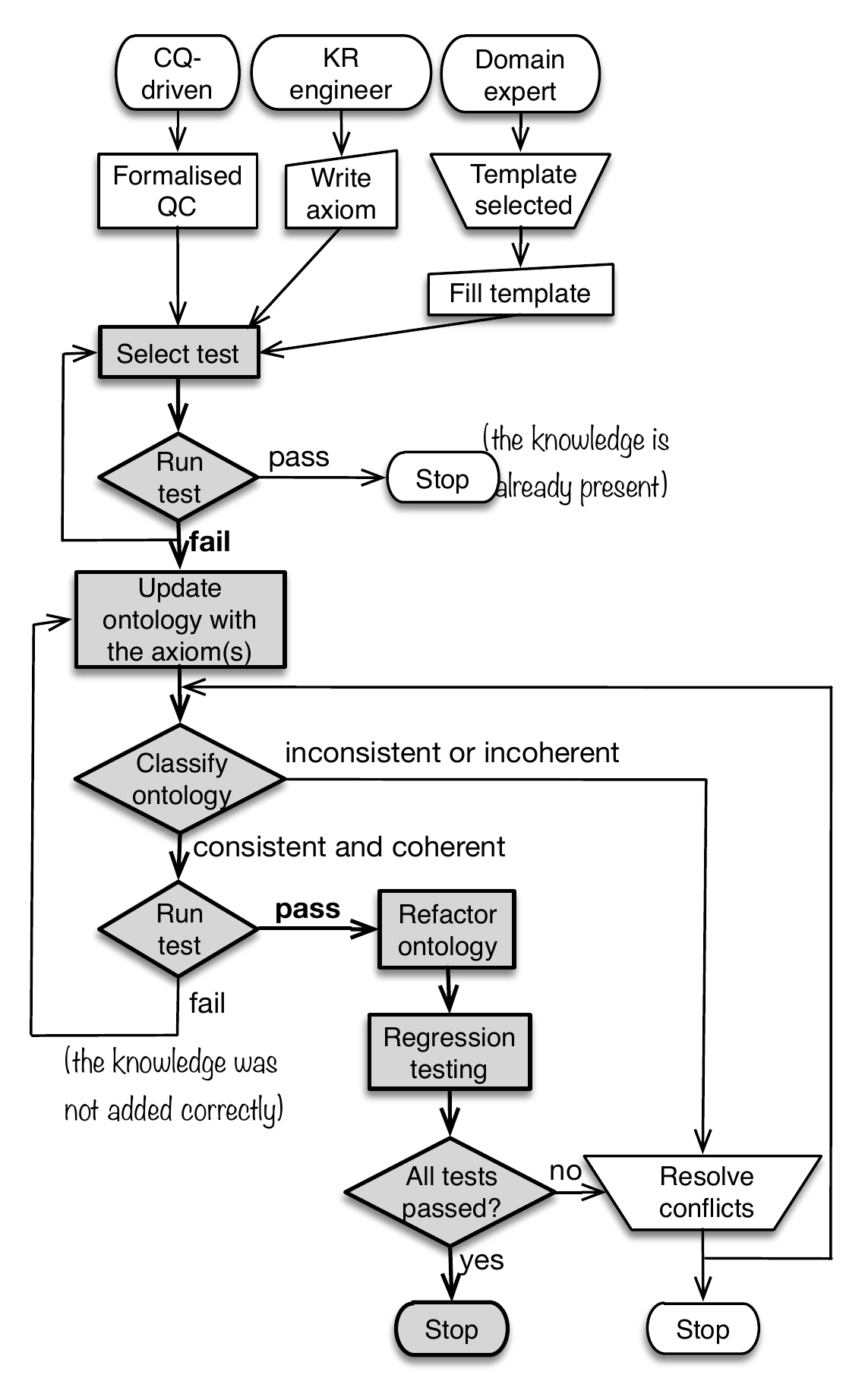}
\caption{General flow for the simplest base case for a single TDD test that corresponds to the red-green (fail-pass) TDD notion, based on the lifecycle in \cite{KL16} (not all possible permutations are shown). The highlighted part indicates the flow of steps for the straightforward fail-pass case.}\label{fig:tddontoIdea}
\end{figure}

\section{Related works}
\label{sec:relworks}

We briefly outline the few ontology testing implementations first and subsequently recap relevant aspects of automated reasoners for OWL ontologies.

\paragraph{Test-Driven Development for ontologies} 
Three TDD tools for ontologies have been proposed recently. 
TDDOnto is a Prot\'eg\'e plugin that allows test axioms to be specified in Prot\'eg\'e's syntax and then
uses the reasoner through the OWL API, 
for it was shown to be the most efficient technology examined \cite{KL16}. Further, once the ontology is classified, one can run as many implemented tests as one wants without invoking a classification again, whereas the other tools require ontology classification with each axiom exploration; hence, by design, they are already less efficient than TDDOnto.
The two other TDD approaches for ontology development use a subset of OWL or have a different scope. 
Tawny-OWL \cite{Warrender15} is an ontology development framework 
implemented in Clojure (not a widely known and used programming language) and it
provides predicate functions which query the reasoner that will return true/false. It can be used in conjunction with any testing framework, such as the built-in ``clojure.test''.
TDD from a domain expert perspective is explored with \textsc{Scone} \cite{Scone:Bitbucket}, which is based on Cucumber \cite{Cucumber}. It leverages controlled natural language and (computationally costly) mock individuals, as do the tests described in \cite{Denaux12}. 
Like Tawny-OWL, {\sc Scone} is also separated from the common ontology development tools.  Tawny-OWL and \textsc{Scone} do not support testing object properties or data properties.

No attempt has been made to rigorously prove the correctness of the testing algorithms of TDDOnto, Tawny-OWL, or \textsc{Scone}. 
None of these three tools support the full range of axioms permitted in OWL 2. Most notably, in none of them it is possible to directly test axioms of the form $C \sqsubseteq D$ where $C$ is {\em not} a named class, such as $\forall R.E \sqsubseteq D$, i.e., where it is a class expression.  
In addition, all three tools give only limited information about the result of any test, being pass/fail in 
Tawny-OWL and \textsc{Scone}, with TDDOnto also reporting missing vocabulary.
This hinders their usefulness as a means to explore an ontology or aid in development. Using TDD with arbitrary General Concept Inclusions (GCIs) and a more comprehensive testing model is possible with TDDonto2 that has been introduced recently \cite{DKL17}, but the paper did not cover technical details and evaluation.

There are several related works that move in the direction of Agile ontology development \cite{Blomqvist12,Peroni17}, such as the 
pay-as-you-go approach for Ontology-Based Data Access \cite{Sequeda17}
and the axiom-based possible world explorer 
\cite{Ferre16},  
a proposal for an ontology testing framework for requirements verification and validation 
\cite{Fernandez17}, 
and the `preparation' step of going from competency questions to computing TDD tests \cite{Dennis17}. 
They have an affinity with TDD insofar as that they could integrate with, or even rely on, a well-functioning, reliable, implementation of TDD for ontology authoring, and they predominantly rely on SPARQL queries rather than an OWL reasoner.

\paragraph{Automated reasoners}
\label{sec:reasoners}

There are numerous OWL 2-compliant automated reasoners  
that employ several mechanisms to handle OWL files; 
e.g., the OWL API \cite{OWLAPI}, OWLlink \cite{Liebig11} (a Java library that defines a widely-supported standard interface for reasoners), and   
OWL-BGP \cite{Kollia11} (a Java library that implements SPARQL). 
OWL-BGP introduces an efficiency overhead which is not present in OWL API \cite{KL16} and OWLlink specifies a protocol for communication between distributed components (availing of the OWL API) whose scenario is thus orthogonal. Therefore we consider only using the reasoner directly through the OWL API and its functionality.

Performance evaluation of TDDOnto found that implementations that introduced temporary ``mock'' individuals were substantially slower than all others \cite{KL16}.  The cause was not explicitly identified, but it is likely due to the need for reclassification of the ontology to include the new assertions.  As stipulated in Section~\ref{sec:intro}, reclassification is undesirable and therefore that approach is not appropriate when it can be avoided.  Instead, after the ontology is initially classified, one should be able to test by making queries, which are assumed to be acceptably efficient.
The OWL API reasoner interface specifies a ``convenience'' method named \textproc{isEntailed} that accepts any axiom and returns a Boolean indicating whether or not that axiom is entailed.  However, it is not mandatory for reasoners to implement this method, and only at most half do so\footnote{There are 73 reasoners listed at \url{http://owl.cs.manchester.ac.uk/tools/list-of-reasoners/} of which 38 showed evidence of maintenance since 2012, and of those, 19 reasoners list support for entailment.}.  We therefore do not want to rely on its use. 
Therefore, we will use other methods available.

\section{Foundations of TDD for ontologies}
\label{sec:foundation}

After a few preliminaries, we introduce the model for testing ontologies and subsequently present a selection of the algorithms.

\subsection{Preliminaries}

We begin with two prerequisite definitions, and then identify relevant reasoner methods and their returned values.

\begin{definition}[Ontology language $\mathcal{L}$]
$\mathcal{L}$ is the language of ontology $O$ with language specification adhering to the OWL 2 standard \cite{OWL2rec}, which has classes denoted $C, D \in V_C$ where $C$ or $D$ may be a named class or a class expression, object properties $OP \in V_{OP}$, individuals $I \in V_I$, and axioms $A \in O_{A}$ that adhere to those permitted in 
OWL 2 DL.
\end{definition}

\begin{definition}[Signatures]\label{def:sig}
  The \emph{signature} $\signature(A)$ of an axiom $A \in O_{A}$ in ontology $O$ represented in $\mathcal{L}$, is the set of all symbols in  
  the axiom. Further, there is a \emph{class signature} $\signatureC(A) \subseteq \signature(A)$, an \emph{object property signature} $\signatureOP(A) \subseteq \signature(A)$ and an \emph{individual signature} $\signatureI(A) \subseteq \signature(A)$.
\end{definition}

\begin{definition}[OWL reasoner methods]
Let $O$ be the ontology under test, 
$\{C, ...\} \in V_C$ are class expressions; $N \in V_C$ is a named class; $a, b \in V_I$; and $R \in V_{OP}$. 
  The following methods are available from the reasoner: \\
$
  \begin{array}{rcl}
   && \Call{isSatisfiable}{C} \iff O \nsementails C \sqsubseteq \bot \\
    && \Call{getSubClasses}{C} = \{ N \in \signatureC(O) \mid O \sementails N \sqsubseteq C \} \\
    && \Call{getInstances}{C} = \{ a \in \signatureI(O) \mid O \sementails a : C \} \\
  &&  \Call{getTypes}{a} = \{ N \in \signatureC(O) \mid O \sementails a : N \} \\
    && \Call{getSameIndividuals}{a} =  \{ b \in \signatureI(O) \mid O \sementails b \equiv a \} \\
 &&   \Call{getDifferentIndividuals}{a} =  \{ b \in \signatureI(O) \mid O \sementails b \not\equiv a \}
  \end{array}
  $
\end{definition}
\noindent  Note that \textproc{getSubClasses} returns the union of equivalent classes and strict subclasses.

\subsection{A model of testing ontologies}
\label{sec:model}

In order to rigorously examine any testing algorithms, we need a formal description of what it means to test an axiom against an ontology
\footnote{Observe that the scope is testing axioms, not a broad informal meaning of `testing' that also would include, say, checking for  naming conventions.}.
In line with the use cases identified in Section~\ref{sec:rationale}, we define the possible test results. 
Instead of the underspecified three possible statuses of the existing tools (pass/fail/unknown), we specify seven cases that include the three existing ones and, principally, refine the `fail' cases. They are listed in order from most grave failure to pass.
\begin{itemize}
  \item {\em Ontology already inconsistent.}  
  That is, $O \vdash \top \sqsubseteq \bot$. 
  The reasoner cannot meaningfully respond to queries, so no claims can be made about the axiom.
  \item {\em Ontology already incoherent} 
  There is at least one $C \in V_C$ such that $C \sqsubseteq \bot$. 
  \item {\em Missing entity in axiom.} 
 By Definition~\ref{def:sig}, we have $\Sigma(A) \nsubseteq \Sigma(O)$.  
  \item {\em Axiom causes inconsistency.}  If the axiom were to be added to $O$, then it would cause it to become inconsistent, i.e., $O \cup A \vdash \bot$.
  \item {\em Axiom causes incoherence.}  If the axiom were to be added to $O$, it would cause at least one named class to become unsatisfiable.
  \item {\em Axiom absent.}  The axiom is not entailed by the ontology ($O \nvdash A$) and can be added without negative consequences.
  \item {\em Axiom entailed.}  The axiom is already entailed by the ontology ($O \vdash A$). 
\end{itemize}
In the context of TDD, 
only 
``Axiom entailed'' is 
a pass; all the others are test failures. 
The first two possible failures apply to the entire suite of tests rather than to any one, so they should be checked only once as preconditions before evaluating any tests. Therefore, we do not consider them 
in any of the algorithms in Section~\ref{sec:algorithms}. 
Similarly, the missing entities case can be a simple check at the start of each test which does not affect how it is otherwise evaluated. 
Since there is no ambiguity, we henceforth abbreviate the remaining cases to ``inconsistent'', ``incoherent'', ``absent'', and ``entailed''. 
This leads to the following formal definition of the testing model.

\begin{definition}[Model for testing]\label{def:testingmodel}
  Given a consistent and coherent ontology $O$, and an axiom $A$ s.t. $\signature(A) \subseteq \signature(O)$, i.e., 
    \[
    \mbox{\em pre-test}_O(A) =
    \begin{cases}
      O \nvdash \top \sqsubseteq \bot &
        \text{ontology is consistent}  \\
        C \in V_C \text{ s.t. } C \sqsubseteq \neg \bot &
        \text{ontology is coherent} \\
      \Sigma(A) \subseteq \Sigma(O) &
        \text{axiom vocabulary} \\
        & \text{\hspace{2mm} elements present in } O \\
    \end{cases}
  \]
  then the result of testing $A$ against $O$ is:
  \[
    \test_O(A) =
    \begin{cases}
      \entailed &
        \text{if } O \sementails A \\
      \inconsistent &
        \text{if } O \union A \sementails \bot \\
      \incoherent &
        \text{if }
        \! \begin{aligned}[t]
          & O \union A \nsementails \bot  \land (\exists C \in \signatureC(O)) \land \\
          & O \union A \sementails C \sqsubseteq \bot
        \end{aligned} \\
      \absent &
        \text {otherwise} \\
    \end{cases}
  \]
\end{definition}
  The resultant values are ordered according to graveness of failure: $\entailed < \absent < \incoherent < \inconsistent$.
One could add nicer labels in a user interface, such as ``this axiom is redundant'' for ``entailed'', but such considerations are outside the current scope.

\subsection{Algorithms and analysis}
\label{sec:algorithms}

We now introduce the algorithms and analyses, in the context of an ontology $O$, which cover the most used types of axioms for (complex) classes, individuals, and RBox axioms that can be expressed as class axioms (domain and range axioms, and functional and local reflexivity and their inverses)\footnote{currently not supported: 1) entity declarations and datatype definitions because they cannot meaningfully be tested, 2) $\oaxiom{HasKey}$ axioms, for they are hardly used due to unusual semantics, 3) RBox axiom types other than listed above, because it is complex to detect inconsistencies \cite{Keet12ekaw} and they require non-standard reasoning services.}. 
In the interest of space and readability, a selection of the algorithms and proofs is presented in this section, which is based on importance, novelty, and to demonstrate the approach to the algorithms and proofs. The remaining ones follow the same pattern and are explained briefly in the text; the complete set of algorithms and their proofs can be found in a technical report \cite{Davies16}.

Overall, this leaves four generic class axioms, three assertions, and six object property axioms. 
Each algorithm is named according to the axiom it tests, as written in OWL 2 functional syntax, prepended with ``\textproc{test}''.  For example, the algorithm for testing $\oaxiom{SubClassOf}$ axioms is named \textproc{testSubClassOf}.

\subsubsection{Class axioms}
\label{sec:algorithms:class}

In the class axioms permitted by OWL 2 DL, all arguments may be arbitrary class expressions, not just named classes, except for $\oaxiom{DisjointUnion}(N, C_1, \ldots, C_n)$ in which $N$ must be a named class.  Consequently, to determine if $C \sqsubseteq D$ holds, it is not sufficient to check if $C \in \Call{getSubClasses}{D}$, because $C$ will not occur in this set if it is not a named class.  To resolve this, we build class expressions from the arguments and query them for satisfiability and instances.

To test such GCIs, we introduce  Algorithm~\ref{alg:testSubClassOf}, which tests subsumption of class expressions and we will show its correctness. 

\begin{algorithm}[H]
  \caption{test $C \sqsubseteq D$}
  \label{alg:testSubClassOf}
  \begin{algorithmic}[1]
    \raggedright
    \Input{$C, D$ class expressions}
    \Function{testSubClassOf}{$C, D$}
      \If{$\Call{getInstances}{C \sqcap \neg D} \ne \emptyset$}
        \label{alg:testSubClassOf:checkInconsistent}
        \State \Return $\inconsistent$
        \label{alg:testSubClassOf:returnInconsistent}
      \ElsIf{$\Call{getSubClasses}{C \sqcap \neg D} \ne \emptyset$}
        \label{alg:testSubClassOf:checkIncoherent}
        \State \Return $\incoherent$
        \label{alg:testSubClassOf:returnIncoherent}
      \ElsIf{\Call{isSatisfiable}{$C \sqcap \neg D$}}
        \label{alg:testSubClassOf:checkAbsent}
        \State \Return $\absent$
      \Else
        \State \Return $\entailed$
        \label{alg:testSubClassOf:returnEntailed}
      \EndIf
    \EndFunction
  \end{algorithmic}
\end{algorithm}


\begin{lemma}
  \label{lem:subclassEntailsUnsat}
  For any set of axioms $O$ and class expressions $C$ and $D$,
  $O \:\sementails\: C \sqsubseteq D \;\iff\; O \:\sementails\: C \sqcap \neg D \sqsubseteq \bot $.
\end{lemma}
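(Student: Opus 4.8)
The plan is to establish the equivalence purely semantically, since here $\sementails$ denotes entailment over all models of $O$ under the standard OWL 2 / description logic semantics. The key observation is that the biconditional already holds model-by-model, so it lifts immediately to entailment over all models.

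First I would unfold the definition of entailment: $O \sementails \alpha$ holds exactly when every interpretation $\I$ that is a model of $O$ also satisfies $\alpha$. Writing $\I \models \alpha$ for satisfaction in the single interpretation $\I$, it therefore suffices to show, for an arbitrary fixed $\I$ with domain $\Delta^{\I}$, that
\[
  \I \models C \sqsubseteq D \iff \I \models C \sqcap \neg D \sqsubseteq \bot .
\]

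Next I would translate each side into a condition on the extensions. By the semantics of $\sqsubseteq$, $\I \models C \sqsubseteq D$ iff $C^{\I} \subseteq D^{\I}$; by the semantics of $\sqcap$, $\neg$, and $\bot$, $\I \models C \sqcap \neg D \sqsubseteq \bot$ iff $(C \sqcap \neg D)^{\I} = \emptyset$, where $(C \sqcap \neg D)^{\I} = C^{\I} \cap (\Delta^{\I} \setminus D^{\I})$. The crux is then the elementary set-theoretic identity that, for any $X, Y \subseteq \Delta$, one has $X \subseteq Y$ iff $X \cap (\Delta \setminus Y) = \emptyset$. Instantiating $X = C^{\I}$ and $Y = D^{\I}$ delivers the per-model equivalence.

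Finally, since this biconditional holds for every model $\I$ of $O$ independently, quantifying over all such models gives $O \sementails C \sqsubseteq D \iff O \sementails C \sqcap \neg D \sqsubseteq \bot$, which is the claim. I do not expect a genuine obstacle: the statement is a direct consequence of the semantics of $\sqsubseteq$, $\sqcap$, $\neg$, and $\bot$, together with one line of set theory. The only point requiring care is methodological---to reason first at the level of an individual interpretation and only then lift to entailment, rather than attempting to manipulate $\sementails$ directly.
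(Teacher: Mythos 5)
Your proof is correct: the per-interpretation equivalence $C^{\I} \subseteq D^{\I} \iff C^{\I} \cap (\Delta^{\I} \setminus D^{\I}) = \emptyset$, lifted over all models of $O$, is exactly the standard argument for this fact. The paper itself offers no proof---it states the lemma is well known and omits it---so your write-up simply supplies the canonical justification the paper is implicitly appealing to, and it does so without any gaps.
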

This is well known and therefore the proof is not included.

\begin{proposition}
  \label{prop:testSubClassOfEntailedSound}
  \label{prop:testSubClassOfEntailedComplete}
  \textproc{testSubClassOf} is sound and complete for 
  entailment.  That is,
$    \Call{testSubClassOf}{C, D} = \entailed \implies 
    \test_O(C \sqsubseteq D) = \entailed$
 and 
    $ \test_O(C \sqsubseteq D) = \entailed \implies 
    \Call{testSubClassOf}{C, D} = \entailed
    $.
\end{proposition}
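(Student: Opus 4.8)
The plan is to reduce both directions to Lemma~\ref{lem:subclassEntailsUnsat} together with the definition of $\test_O$, which tells us that $\test_O(C \sqsubseteq D) = \entailed$ holds precisely when $O \sementails C \sqsubseteq D$. Since \textproc{testSubClassOf} returns $\entailed$ only in its final \textbf{else} branch, both soundness and completeness amount to showing that this branch is reached and taken exactly when $O \sementails C \sqcap \neg D \sqsubseteq \bot$. So the whole argument hinges on connecting the algorithm's control flow to the satisfiability of $C \sqcap \neg D$.

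For soundness I would argue straight from the control flow. If $\Call{testSubClassOf}{C, D} = \entailed$, then control reached the final \textbf{else}, so in particular the guard $\Call{isSatisfiable}{C \sqcap \neg D}$ evaluated to false. By the specification of \textproc{isSatisfiable} this means $O \sementails C \sqcap \neg D \sqsubseteq \bot$, and Lemma~\ref{lem:subclassEntailsUnsat} then yields $O \sementails C \sqsubseteq D$, i.e.\ $\test_O(C \sqsubseteq D) = \entailed$.

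For completeness I would start from $\test_O(C \sqsubseteq D) = \entailed$, hence $O \sementails C \sqsubseteq D$, and apply Lemma~\ref{lem:subclassEntailsUnsat} to obtain $O \sementails C \sqcap \neg D \sqsubseteq \bot$. The work here is to verify that the algorithm does not short-circuit in one of the two earlier branches. I would show that both \textproc{getInstances} and \textproc{getSubClasses} return $\emptyset$ on input $C \sqcap \neg D$: any $a \in \Call{getInstances}{C \sqcap \neg D}$ would satisfy $O \sementails a : C \sqcap \neg D$, which with $O \sementails C \sqcap \neg D \sqsubseteq \bot$ forces $O \sementails \top \sqsubseteq \bot$, contradicting the consistency precondition; likewise any $N \in \Call{getSubClasses}{C \sqcap \neg D}$ would give $O \sementails N \sqsubseteq \bot$, contradicting coherence. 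With both earlier guards failing, the flow reaches $\Call{isSatisfiable}{C \sqcap \neg D}$, which returns false since $O \sementails C \sqcap \neg D \sqsubseteq \bot$, so the \textbf{else} branch returns $\entailed$.

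The main obstacle is precisely this completeness step: one must rule out the premature $\inconsistent$ and $\incoherent$ returns, and the only thing that does so is the pre-test assumption that $O$ is consistent and coherent. I expect the soundness direction, by contrast, to be essentially a one-line appeal to the \textproc{isSatisfiable} specification and Lemma~\ref{lem:subclassEntailsUnsat}.
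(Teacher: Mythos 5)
Your proposal is correct and follows essentially the same route as the paper's proof: both directions hinge on Lemma~\ref{lem:subclassEntailsUnsat}, with soundness read off from the failed \textproc{isSatisfiable} guard (the paper phrases this contrapositively, you argue it directly, which is equivalent) and completeness established by using the consistency and coherence preconditions to rule out the $\inconsistent$ and $\incoherent$ early returns. No gaps; your spelled-out justification that a named subclass or instance of the unsatisfiable $C \sqcap \neg D$ would violate the pre-test conditions is exactly the paper's argument made slightly more explicit.
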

\begin{proof}
\underline{Soundness}
  Algorithm~\ref{alg:testSubClassOf} can only return entailed at line \ref{alg:testSubClassOf:returnEntailed}, so the three {\bf if}-conditions must all be false.  So
  \vspace{-1mm}
  \begin{align}
    \Call{getInstances}{C \sqcap \neg D} = \emptyset\,\, \land 
    {} & \Call{getSubClasses}{C \sqcap \neg D} = \emptyset\,\, \land \nonumber \\
     {} & \lnot \Call{isSatisfiable}{C \sqcap \neg D} \label{eqn:testSubClassOfEntailed} 
    \end{align}
  \vspace{-1mm}
  Now suppose $O \nsementails C \sqsubseteq D$.  By Lemma~\ref{lem:subclassEntailsUnsat}, $O \nsementails C \sqcap \neg D \sqsubseteq \bot$.  In other words, $C \sqcap \neg D$ is satisfiable, which contradicts the last term of Eq.~\ref{eqn:testSubClassOfEntailed}.  Hence the supposition is false, so
    $ O \sementails C \sqsubseteq D 
    \iff{} 
    \test_O(C \sqsubseteq D) = \entailed
    $.

\underline{Completeness}
  The algorithm returns entailed if Eq.~\ref{eqn:testSubClassOfEntailed} holds (see Soundness).
  From $\test_O(C \sqsubseteq D) = \entailed$ we have that $O \sementails C \sqsubseteq D$, and by Lemma~\ref{lem:subclassEntailsUnsat}, $O \sementails C \sqcap \neg D \sqsubseteq \bot$ so the last term of the equation is true.
  Since $C \sqcap \neg D$ is unsatisfiable, by the coherence precondition it has no named subclasses, and by the consistency precondition it has no instances.  Therefore the first and second terms of the equation are also true.
  Therefore Eq.~\ref{eqn:testSubClassOfEntailed} holds, and so the algorithm returns entailed. 
\end{proof}

\begin{proposition}
  \label{prop:testSubClassOfInconsistentSound}\label{prop:testSubClassOfInconsistentComplete}
  \textproc{testSubClassOf} is sound and complete w.r.t. inconsistency.
\end{proposition}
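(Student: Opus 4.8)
The plan is to mirror the structure of the entailment argument (Proposition~\ref{prop:testSubClassOfEntailedSound}) and exploit the fact that Algorithm~\ref{alg:testSubClassOf} returns $\inconsistent$ at line~\ref{alg:testSubClassOf:returnInconsistent} exactly when its first {\bf if}-condition holds, i.e.\ when $\Call{getInstances}{C \sqcap \neg D} \neq \emptyset$. By the definition of the reasoner method, this condition is equivalent to the existence of a named individual $a \in \signatureI(O)$ with $O \sementails a : C \sqcap \neg D$. Both soundness and completeness therefore reduce to relating this witness condition to $O \union (C \sqsubseteq D) \sementails \bot$, after a short check that such a result is indeed classified as $\inconsistent$ rather than $\entailed$ by $\test_O$.

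For soundness I would assume the algorithm returns $\inconsistent$ and fix a witness $a$, so that $O \sementails a : C$ and $O \sementails a : \neg D$. Any model $\mathcal{I}$ of $O \union (C \sqsubseteq D)$ is in particular a model of $O$, hence $a^{\mathcal{I}} \in C^{\mathcal{I}}$ and $a^{\mathcal{I}} \notin D^{\mathcal{I}}$; but $C^{\mathcal{I}} \subseteq D^{\mathcal{I}}$ then forces $a^{\mathcal{I}} \in D^{\mathcal{I}}$, a contradiction. Thus $O \union (C \sqsubseteq D)$ has no model, i.e.\ $O \union (C \sqsubseteq D) \sementails \bot$. Since $O$ is consistent by the precondition, $O \sementails C \sqsubseteq D$ is impossible (it would make $O \union (C \sqsubseteq D)$ consistent), so the entailed case does not fire and $\test_O(C \sqsubseteq D) = \inconsistent$. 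This direction is routine and model-theoretic.

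For completeness I would start from $\test_O(C \sqsubseteq D) = \inconsistent$, i.e.\ $O \union (C \sqsubseteq D) \sementails \bot$, and apply Lemma~\ref{lem:subclassEntailsUnsat} to rewrite the added axiom, obtaining that $O \union (C \sqcap \neg D \sqsubseteq \bot)$ is inconsistent; equivalently, every model of $O$ interprets $C \sqcap \neg D$ as non-empty. The goal is then to produce a named individual $a \in \signatureI(O)$ provably in $C \sqcap \neg D$, so that $\Call{getInstances}{C \sqcap \neg D} \neq \emptyset$ and the algorithm returns $\inconsistent$ at its first branch.

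I expect this last bridging step to be the main obstacle: passing from ``$C \sqcap \neg D$ is non-empty in every model of $O$'' to ``some \emph{named} individual is entailed to lie in $C \sqcap \neg D$''. The delicate point is that a description-logic model can meet a forced non-emptiness demand with an \emph{anonymous} witness (e.g.\ one introduced by an existential restriction) rather than a named one, in which case $\Call{getInstances}{C \sqcap \neg D}$ would still be empty even though adding $C \sqsubseteq D$ yields inconsistency. I would therefore structure this direction as the contrapositive---assume no named instance of $C \sqcap \neg D$ is entailed and try to build a single model of $O$ in which $C \sqcap \neg D$ is empty---and I anticipate that the argument must lean heavily on the preconditions (consistency and coherence) to rule out the competing $\incoherent$ and $\absent$ verdicts and to guarantee that the cause of any clash is localisable to a named individual. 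Pinning down precisely which assumptions make this inference valid, as opposed to it being witnessed only anonymously, is where I expect the bulk of the care to be required.
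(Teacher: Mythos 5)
Your soundness half is correct and is essentially the paper's own argument: a named individual $a$ with $O \sementails a : C \sqcap \neg D$ clashes with $C \sqsubseteq D$ in every model of $O \union (C \sqsubseteq D)$, and consistency of $O$ rules out the $\entailed$ verdict taking precedence. The completeness half, however, is only a plan: you correctly reduce it (via Lemma~\ref{lem:subclassEntailsUnsat}) to producing a \emph{named} individual entailed to lie in $C \sqcap \neg D$, observe that an anonymous witness forced by an existential restriction would block this, and leave open which assumptions would make the inference valid. That unresolved step is the entire content of the completeness claim, so as written the proposal proves only half the proposition.

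That said, your diagnosis is exactly right, and it cuts deeper than you suggest: the step cannot be repaired under the paper's preconditions, and the paper's own proof is invalid at precisely this point. The paper argues completeness by taking ``the interpretation of $O$ with the smallest domain'', interpreting every class $E$ as the images of the named individuals entailed to be in $E$, and asserting that this ``clearly still models $O$''. It need not: for $O = \{\, a : A,\ A \sqsubseteq \exists R.(C \sqcap \neg D) \,\}$, which is consistent, coherent, and contains $C$ and $D$ in its signature, the constructed interpretation gives $a$ no $R$-successor at all, so it is not a model of $O$. Worse, this $O$ is an outright counterexample to the completeness direction itself: $O \union (C \sqsubseteq D) \sementails \bot$ (the forced anonymous $R$-successor of $a$ must lie in $C \sqcap \neg D$, which $C \sqsubseteq D$ makes empty), yet no named individual and no named class of $O$ is entailed to lie in, respectively be subsumed by, $C \sqcap \neg D$, and $C \sqcap \neg D$ is satisfiable; hence Algorithm~\ref{alg:testSubClassOf} returns $\absent$, not $\inconsistent$. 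So the bridging step you declined to fake is genuinely false for arbitrary OWL 2 class expressions; the proposition's completeness direction holds only under extra assumptions (e.g., that any inconsistency caused by adding $C \sqsubseteq D$ is witnessed by a named individual), and your refusal to paper over this is to your credit, even though it leaves the proof unfinished.
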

\begin{proof}
\underline{Soundness}
  Algorithm~\ref{alg:testSubClassOf} can only return inconsistent at line \ref{alg:testSubClassOf:returnInconsistent}, so the first {\bf if}-condition holds, so
  $\Call{getInstances}{C \sqcap \neg D} \ne \emptyset $, 
  which means there exists an individual $a$ such that
    $a : C \sqcap \neg D 
    \iff{} 
     a : C \;\land\; a : \neg D $. 
%
  Under $O \union (C \sqsubseteq D)$ it follows also that $a : D$, which is a contradiction, so
    $ O \union (C \sqsubseteq D) \sementails \bot 
    \iff{}
    \test_O(C \sqsubseteq D) = \inconsistent$. 

\underline{Completeness} 
  We have that $O$ is consistent, so it has an interpretation, but $O \union (C \sqsubseteq D)$ is inconsistent, so it has no interpretations.
  Suppose the algorithm does not return inconsistent.  Then it must be that
  $\Call{getInstances}{C \sqcap \neg D} = \emptyset$. 
  But in this case there exists an interpretation $\I$ which models both $O$ and $O \union (C \sqsubseteq D)$.  Let $\I$ be the interpretation of $O$ with the smallest domain.  This means that the interpretation of any class $E$ only contains elements which correspond to individuals which must be in that class: 
  $E^\I = \{ x \in \Delta^\I \mid (\exists a \in \signatureI(O)) \; a : E \land a^\I = x \}$. 
  This clearly still models $O$ because every individual is still in all classes it is entailed to be in.
  Under the supposition, we have that
  $(C \sqcap \neg D)^\I = \emptyset$. 
  So for any individual $a$,
  $a : \neg (C \sqcap \neg D)$. 
  Letting $a : C$, then 
    $a : \neg (C \sqcap \neg D) 
    \implies{} 
    a : \neg C \sqcup D 
    \implies{} 
    a : D $.
  From the construction of $\I$, this means that
    $C^\I \subseteq D^\I 
    \implies{} 
    \I \synentails C \sqsubseteq D$. 
  So $\I$ also models $O \union (C \sqsubseteq D)$.
  This contradicts the initial condition that $O \union (C \sqsubseteq D)$ is inconsistent, so the supposition must be false, and therefore the algorithm returns inconsistent. 
\end{proof}

\begin{proposition}
  \label{prop:testSubClassOfIncoherentSound}
    \label{prop:testSubClassOfIncoherentComplete}
  \textproc{testSubClassOf} is sound and complete w.r.t. incoherence.
\end{proposition}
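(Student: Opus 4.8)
The plan is to follow the template of the two preceding propositions, proving soundness and completeness separately and re-using the inconsistency result as a black box. Algorithm~\ref{alg:testSubClassOf} returns $\incoherent$ (line~\ref{alg:testSubClassOf:returnIncoherent}) in exactly one situation: when $\Call{getInstances}{C \sqcap \neg D} = \emptyset$ while $\Call{getSubClasses}{C \sqcap \neg D} \neq \emptyset$. I will read both reasoner calls through Lemma~\ref{lem:subclassEntailsUnsat}: a named class $M$ with $O \sementails M \sqsubseteq C \sqcap \neg D$ is precisely a named class that becomes unsatisfiable once $C \sqsubseteq D$ is asserted, since then $C \sqcap \neg D \sqsubseteq \bot$. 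I will also invoke the established inconsistency characterisation (Proposition~\ref{prop:testSubClassOfInconsistentSound}), namely that $\Call{getInstances}{C \sqcap \neg D} \neq \emptyset \iff O \union (C \sqsubseteq D) \sementails \bot$.

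For soundness, assume the algorithm returns $\incoherent$. From $\Call{getSubClasses}{C \sqcap \neg D} \neq \emptyset$ I obtain a named $M \in \signatureC(O)$ with $O \sementails M \sqsubseteq C \sqcap \neg D$, so by Lemma~\ref{lem:subclassEntailsUnsat} $O \union (C \sqsubseteq D) \sementails M \sqsubseteq \bot$, exhibiting an unsatisfiable named class. From $\Call{getInstances}{C \sqcap \neg D} = \emptyset$ and the inconsistency characterisation I get $O \union (C \sqsubseteq D) \nsementails \bot$, which places the result in the $\incoherent$ branch rather than the $\inconsistent$ branch of $\test_O$. These two facts together are exactly the definition of $\incoherent$ in Definition~\ref{def:testingmodel}.

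For completeness, assume $\test_O(C \sqsubseteq D) = \incoherent$, so $O \union (C \sqsubseteq D) \nsementails \bot$ and some named $N \in \signatureC(O)$ satisfies $O \union (C \sqsubseteq D) \sementails N \sqsubseteq \bot$. Consistency together with the inconsistency characterisation gives $\Call{getInstances}{C \sqcap \neg D} = \emptyset$, so the first {\bf if}-condition fails and it suffices to prove $\Call{getSubClasses}{C \sqcap \neg D} \neq \emptyset$, i.e.\ that some named class is entailed to lie in $C \sqcap \neg D$. My intended route is by contradiction, adapting the smallest-domain model $\I$ from the inconsistency proof: assuming no named class is a subclass of $C \sqcap \neg D$, I would try to show that $\I$ is, or extends to, a model of $O \union (C \sqsubseteq D)$ in which $N$ is non-empty, contradicting $O \union (C \sqsubseteq D) \sementails N \sqsubseteq \bot$.

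The hard part is precisely this final step. In contrast to the inconsistency case, a named class can be forced unsatisfiable only \emph{indirectly}: for instance $O = \{\, N \sqsubseteq \exists R.(C \sqcap \neg D) \,\}$ makes $N$ unsatisfiable once $C \sqcap \neg D$ collapses to $\bot$, yet neither $N$ nor any other named class is entailed to be a subclass of $C \sqcap \neg D$, so $\Call{getSubClasses}{C \sqcap \neg D} = \emptyset$ in this case. The emptiness of $C \sqcap \neg D$ propagates to the named class through \emph{anonymous} domain elements introduced by existentials (or cardinalities), which the smallest-domain construction discards. Making the completeness argument go through therefore hinges on controlling these anonymous witnesses---either by showing that the stated preconditions rule such configurations out, or by strengthening what the $\incoherent$ test queries---and this is the single step on which the correctness of the $\incoherent$ branch really turns.
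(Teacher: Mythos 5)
Your soundness half is essentially the paper's proof: from the second \textbf{if}-condition you extract a named $M$ with $O \sementails M \sqsubseteq C \sqcap \neg D$, you use the contrapositive of Proposition~\ref{prop:testSubClassOfInconsistentComplete} to get consistency of $O \union (C \sqsubseteq D)$, and Lemma~\ref{lem:subclassEntailsUnsat} then shows $M$ becomes unsatisfiable, which is exactly the $\incoherent$ case of Definition~\ref{def:testingmodel}. That part is fine and matches the paper.

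On completeness, you have put your finger on a genuine flaw rather than left a gap in your own argument. The paper's proof of this direction opens with the claim that if $C \sqsubseteq D$ is added to $O$ then ``the only classes that are affected are $C$ and its subclasses,'' and from this derives that incoherence forces some named $N$ with $O \sementails N \sqsubseteq C \sqcap \neg D$, i.e., $\Call{getSubClasses}{C \sqcap \neg D} \ne \emptyset$. That opening claim is precisely the step your counterexample refutes: with $O = \{\, N \sqsubseteq \exists R.(C \sqcap \neg D) \,\}$, which is consistent, coherent, and contains $C$ and $D$ in its signature, adding $C \sqsubseteq D$ makes $C \sqcap \neg D$ --- hence $\exists R.(C \sqcap \neg D)$, hence $N$ --- unsatisfiable, so $\test_O(C \sqsubseteq D) = \incoherent$; yet $O \nsementails N \sqsubseteq C \sqcap \neg D$ (a model may place the $N$-element outside $C$ while giving it an $R$-successor in $C \sqcap \neg D$), there are no individuals, and $C \sqcap \neg D$ is satisfiable w.r.t.\ $O$, so Algorithm~\ref{alg:testSubClassOf} falls through all three tests and returns $\absent$. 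Unsatisfiability can propagate to a named class through \emph{anonymous} existential witnesses, which neither $\Call{getInstances}{C \sqcap \neg D}$ nor $\Call{getSubClasses}{C \sqcap \neg D}$ can detect, so completeness w.r.t.\ incoherence fails for the algorithm as specified under the stated preconditions. Your diagnosis --- that the argument can only be rescued by strengthening the preconditions or by strengthening what the incoherence branch queries (e.g., checking satisfiability of every named class under the hypothetical addition, which is what a reclassification-free algorithm is trying to avoid) --- is the correct conclusion; the unjustified step sits in the paper's proof, not in your proposal.
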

\begin{proof}
\underline{Soundness}
  Algorithm~\ref{alg:testSubClassOf} can only return incoherent at line \ref{alg:testSubClassOf:returnIncoherent}, so the first {\bf if}-condition must be false and the second true.  So
    $\Call{getInstances}{C \sqcap \neg D} = \emptyset 
    {} \land \Call{getSubClasses}{C \sqcap \neg D} \ne \emptyset $
  Therefore, by the second term, there exists some named class $N \in \signatureC(O)$ such that
  $N \sqsubseteq C \sqcap \neg D$.  
  By the contrapositive of Proposition~\ref{prop:testSubClassOfInconsistentComplete}, $O \union (C \sqsubseteq D)$ is consistent, so by Lemma~\ref{lem:subclassEntailsUnsat},
  \begin{align}
    & O \union (C \sqsubseteq D) \:\sementails\: C \sqcap \neg D \sqsubseteq \bot 
    \implies{} O \union (C \sqsubseteq D) \:\sementails\: N \sqsubseteq \bot  \nonumber \\
    & \implies{} \test_O(C \sqsubseteq D) = \incoherent \nonumber
  \end{align}

\underline{Completeness}
  If 
  $C \sqsubseteq D$ is added to $O$, then the only classes that are affected are $C$ and its subclasses.  Consider a named class $N \sqsubseteq C$.  If $O \nsementails N \sqsubseteq \neg D$ then it is possible that any element in the $N^\I$ is also in $D^\I$ and thus it is possible that $N \sqsubseteq D$.  If this is true for all such $N$, then they are all satisfiable in $O \union (C \sqsubseteq D)$ which is therefore coherent, so it must not be true for at least one $N$.  That is,
  \begin{align}
    & (\exists N \in \signatureC(O)) \;\; N \sqsubseteq C \land N \sqsubseteq \neg D \nonumber \\
    & \iff{} (\exists N \in \signatureC(O)) \;\; N \sqsubseteq C \sqcap \neg D \nonumber \\
    & \iff{} (\exists N \in \signatureC(O)) \;\; N \in \Call{getSubClasses}{C \sqcap \neg D} \nonumber \\
    & \iff{} \Call{getSubClasses}{C \sqcap \neg D} \ne \emptyset  \nonumber
  \end{align}
  From the contrapositive of Proposition~\ref{prop:testSubClassOfInconsistentComplete} we have that the first {\bf if}-condition is false, and we have shown that the second {\bf if}-condition is true, so the algorithm returns incoherent. 
\end{proof}

\begin{theorem}
  \label{thm:testSubClassOf}
  \textproc{testSubClassOf} is correct and terminating.
\end{theorem}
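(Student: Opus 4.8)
The plan is to obtain the theorem almost entirely as a packaging of the three preceding propositions, handling correctness and termination separately. By \emph{correct} I take the requirement that $\Call{testSubClassOf}{C, D} = \test_O(C \sqsubseteq D)$ for every admissible input, i.e.\ for every consistent and coherent $O$ and axiom $C \sqsubseteq D$ with $\signature(C \sqsubseteq D) \subseteq \signature(O)$. The first thing I would record is that both sides range over the same four values $\entailed$, $\inconsistent$, $\incoherent$, $\absent$, and that each side returns exactly one of them: the cascade of \textbf{if}/\textbf{else if}/\textbf{else} in Algorithm~\ref{alg:testSubClassOf} selects a single branch, and the cases of Definition~\ref{def:testingmodel} assign a single value (the $\absent$ branch being the explicit ``otherwise''). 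Hence to prove the two equal it suffices to match them value by value.

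For three of the four values the matching is already in hand: Proposition~\ref{prop:testSubClassOfEntailedSound} supplies both directions for $\entailed$, Proposition~\ref{prop:testSubClassOfInconsistentSound} for $\inconsistent$, and Proposition~\ref{prop:testSubClassOfIncoherentSound} for $\incoherent$. So the only residual case is $\absent$, which I would dispatch by elimination rather than directly. If the algorithm returns $\absent$, then the test result cannot be any of the other three, since by the completeness half of each proposition such a result would have forced the algorithm to return that value instead; being exhaustive, the test result is therefore $\absent$. Conversely, if $\test_O(C \sqsubseteq D) = \absent$, then the algorithm cannot return any of the other three, since by the soundness half of each proposition such a return would force the corresponding test result; hence the algorithm returns $\absent$. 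Combined with the three matched values, this gives $\Call{testSubClassOf}{C, D} = \test_O(C \sqsubseteq D)$ on every admissible input, which is correctness.

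Termination I would treat in a line. OWL~2 DL is decidable, so each reasoner method the algorithm invokes---$\Call{getInstances}{\cdot}$, $\Call{getSubClasses}{\cdot}$, and $\Call{isSatisfiable}{\cdot}$---halts on every input. Algorithm~\ref{alg:testSubClassOf} has no loop and no recursive call; along any control path it issues at most three such invocations and then returns. A finite composition of terminating calls terminates, so the algorithm does.

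The step that carries whatever subtlety remains is the $\absent$ case, and specifically the licence to conclude the fourth value by eliminating the other three: this is valid only because each of the two fourfold classifications is single-valued. On the algorithm side this is immediate from the mutually exclusive control flow of the cascade; on the test-model side it is guaranteed by the priority ordering of the cases in Definition~\ref{def:testingmodel}, which is in turn underwritten by the pre-test hypotheses---for instance, a consistent $O$ with $O \sementails C \sqsubseteq D$ cannot simultaneously satisfy $O \union (C \sqsubseteq D) \sementails \bot$, since adding an already-entailed axiom preserves consistency. I would therefore state this single-valuedness explicitly before invoking the three propositions, after which the rest follows mechanically.
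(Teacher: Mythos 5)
Your proposal is correct and follows essentially the same route as the paper: it packages the three soundness-and-completeness propositions, settles the $\absent$ case by elimination, and dismisses termination via the absence of loops and recursion. The only difference is one of explicitness---you spell out the single-valuedness of both classifications and the two directions of the elimination argument, which the paper compresses into ``the result is absent when it is not one of these other three.''
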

\begin{proof}
  It has been shown that Algorithm~\ref{alg:testSubClassOf} is sound and complete for entailment, inconsistency, and incoherence, and the result is absent when it is not one of these other three.  Therefore the algorithm returns the correct result in all cases. 
  Termination is trivial, since the algorithm contains no loops or recursion. 
\end{proof}

Note that a test for local reflexivity can avail of the same algorithm, with  \Call{testSubClassOf}{$\top$,ObjectHasSelf(R)} and likewise for irreflexivity, and it holds similarly for functional as  \Call{testSubClassOf}{$\top$,ObjectMaxCardinality(1,R)}, and likewise for inverse functional, and object property domain and range (see \cite{Davies16} for details).


%
Testing for equivalent classes is done with a \textproc{testEquivalentClasses} function. The algorithm is correct and terminating, which follows directly from 
its specification: it iterates through \textproc{testSubClassOf} in a nested {\bf for}-loop for classes $C_1, ..., C_n$ (with $n \geq 2$), and given that \textproc{testSubClassOf} is correct and terminating, then \textproc{testEquivalentClasses} so is the former
 (see \cite{Davies16} for details).

Algorithm \ref{alg:testDisjointClasses} for disjoint classes
is sound and complete for entailment, inconsistency, and  incoherence, following largely the proofs of \textproc{testSubClassOf} that it uses within its {\bf for}-loop, and therefore not included here 
(These proofs are available in the online technical report \cite{Davies16}). Likewise,  Algorithm \ref{alg:testDisjointUnion} for disjoint union is correct and terminating, relying on the previous results for the test for equivalent classes and for disjoint classes.

\begin{algorithm}[H]
  \caption{ test $\oaxiom{DisjointClasses}(C_1, \ldots, C_n)$}
  \label{alg:testDisjointClasses}
  \begin{algorithmic}[1]
    \raggedright
    \Input{
      $C_1, \ldots, C_n$ class expressions \\
      $n \ge 2$
    }
    \Function{testDisjointClasses}{$C_1, \ldots, C_n$}
      \State $r \gets \entailed$
      \For{$i \gets 1$ \To $n-1$}
        \For{$j \gets i+1$ \To $n$}
          \State $r' \gets \Call{testSubClassOf}{C_i, \neg C_j}$
          \label{alg:testDisjointClasses:inner}
          \State $r \gets \max(r, r')$
        \EndFor
      \EndFor
      \State \Return $r$
    \EndFunction
  \end{algorithmic}
\end{algorithm}

\begin{algorithm}[H]
  \caption{ test $\oaxiom{DisjointUnion}(N, C_1, \ldots, C_n)$}
  \label{alg:testDisjointUnion}
  \begin{algorithmic}[1]
    \raggedright
    \Input{
      $N$ named class \\
      $C_1, \ldots, C_n$ class expressions \\
      $n \ge 2$
    }
    \Function{testDisjointUnion}{$N, C_1, \ldots, C_n$}
      \State $r_1 \gets \Call{testEquivalentClasses}{N, C_1 \sqcup \ldots \sqcup C_n}$
      \State $r_2 \gets \Call{testDisjointClasses}{C_1, \ldots, C_n}$
      \State \Return $\max(r_1, r_2)$
    \EndFunction
  \end{algorithmic}
\end{algorithm}

\subsubsection{Assertions}
\label{sec:algorithms:assert}

Observe that adding an assertion does not affect satisfiability of classes, so $O$ cannot become incoherent.  We take this as given for all axioms tested in this section.
Algorithm \ref{alg:testSameIndividual} tests equivalence and Algorithm \ref{alg:testDifferentIndividuals} difference of individuals. 
When an algorithm accepts $n$ individuals $\{a_1, \ldots, a_n\}$ as arguments, we use the shorthand $\mathbf{a}$ for this set. 
  We use the integer variable $i$ to iterate over the indices of individuals given as arguments, and the variables $\delta$ and $\gamma$ to temporarily store a set of individuals. Soundness and completeness proofs are available in the online technical report \cite{Davies16}.
  
%

\begin{algorithm}[h]
  \caption{test $a_1 \equiv \ldots \equiv a_n$}
  \label{alg:testSameIndividual}
  \begin{algorithmic}[1]
    \raggedright
    \Input{
      $a_1, \ldots, a_n$ individuals \\
      $n \ge 2$
    }
    \Function{testSameIndividual}{$a_1, \ldots, a_n$}
      \If{$\{a_2, \ldots, a_n\} \subseteq \Call{getSameIndividuals}{a_1}$}
        \State \Return $\entailed$
        \label{alg:testSameIndividual:returnEntailed}
      \Else
        \For{$i \gets 1$ \To $n$}
          \State $\delta \gets \Call{getDifferentIndividuals}{a_i}$
          \If{$\{a_1, \ldots, a_n\} \intersect \delta \ne \emptyset$}
            \State \Return $\inconsistent$
            \label{alg:testSameIndividual:returnInconsistent}
          \EndIf
        \EndFor
        \State \Return $\absent$
      \EndIf
    \EndFunction
  \end{algorithmic}
\end{algorithm}
\vspace{3mm}

\begin{algorithm}[h]
  \caption{ test $\oaxiom{DifferentIndividuals}(a_1, \ldots, a_n)$}
  \label{alg:testDifferentIndividuals}
  \begin{algorithmic}[1]
    \raggedright
    \Input{
      $a_1, \ldots, a_n$ individuals \\
      $n \ge 2$
    }
    \Function{testDifferentIndividuals}{$a_1, \ldots, a_n$}
      \For{$i \gets 1$ \To $n$}
        \State $\gamma \gets \Call{getSameIndividuals}{a_i}$
        \If{$(\{a_1, \ldots, a_n\} \setminus \{a_i\}) \intersect \gamma \ne \emptyset$}
          \State \Return $\inconsistent$
          \label{alg:testDifferentIndividuals:returnInconsistent}
        \EndIf
      \EndFor
      \For{$i \gets 1$ \To $n$}
        \State $\delta \gets \Call{getDifferentIndividuals}{a_i}$
        \If{$(\{a_1, \ldots, a_n\} \setminus \{a_i\}) \nsubseteq \delta$}
          \State \Return $\absent$
        \EndIf
      \EndFor
      \State \Return $\entailed$
      \label{alg:testDifferentIndividuals:returnEntailed}
    \EndFunction
  \end{algorithmic}
\end{algorithm}

Finally, the algorithm for class assertions 
checks whether the individual is an instance of a class expression, using the \Call{getInstances}{C} function. 

\section{Evaluation of TDD with TDDonto2}
\label{sec:eval}

\begin{figure*}[t]
\includegraphics[width=1.0\textwidth]{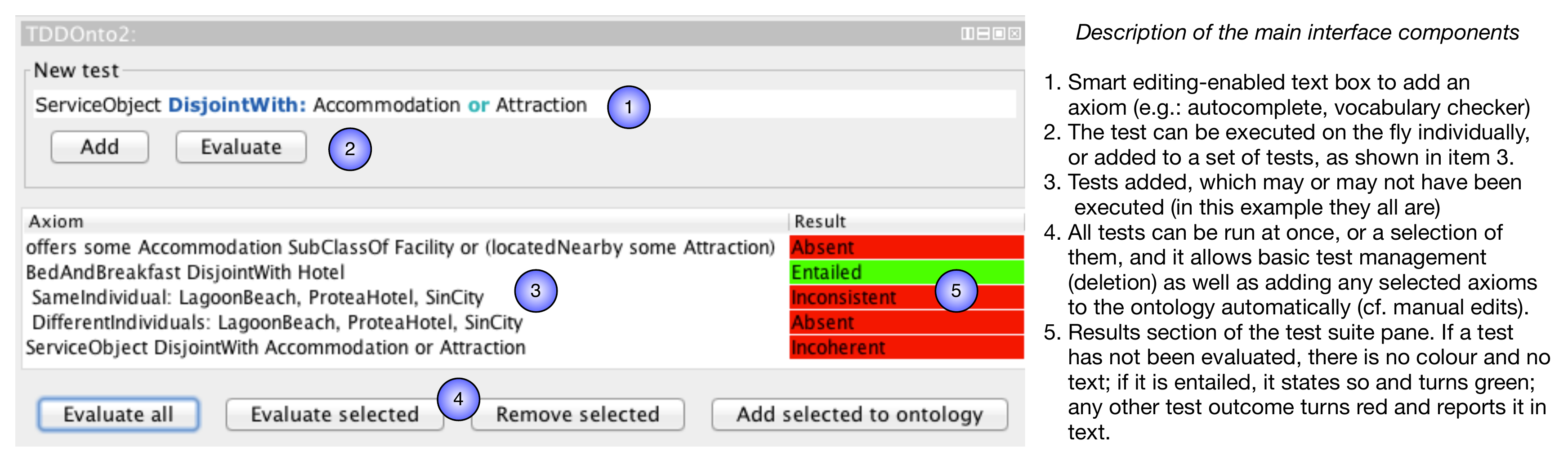}
\caption{Annotated screenshot of TDDonto2 with a few evaluated sample axioms and a tourism ontology.}\label{fig:tddonto2example}
\end{figure*}

In order to evaluate the model of testing and the algorithms, we developed a Pro\'teg\'e plugin, named TDDonto2, that implements these TDD features, which will be introduced in Section~\ref{sec:tool}. The actual evaluation will be presented afterwards. 
We carry out two ontology authoring tests: a quantitative approach to editing efficiency (Section~\ref{sec:editing}) and an experiment with 25 novice modellers (Section~\ref{sec:userstudy}). 

\subsection{The TDDonto2 Prot\'eg\'e plugin}
\label{sec:tool}

TDDonto2 \cite{DKL17} is an updated version of TDDOnto \cite{KL16} and is also a Prot\'eg\'e 5.x plugin that uses the OWL API, 
as that was shown to have resulted in the best performance cf the other techniques investigated \cite{KL16,LK16}. It implements the new model of testing for TDD and the new TDD algorithms\ and also has a GUI to wrap around it that incorporates some of the Prot\'eg\'e features, such as autocomplete and recognising whether a term is in the ontology's vocabulary. 
The plugin can be added to any tab in Prot\'eg\'e, as desired\footnote{it thus still allows the modeller to browse to the standard interface with the classes, object properties, and individuals tabs and so forth to author ontologies.}. 
An annotated screenshot of the plugin with several tests is included in Figure~\ref{fig:tddonto2example}. 
Within the broader setting of an overarching TDD methodology for ontology authoring, it covers the aspects from entering the axiom (wherever it came from) until (but excluding) the refactoring step. 
Like its predecessor TDDOnto, one can type a single axiom and evaluate it directly, add several axioms and evaluate the whole a set of axioms or a selected subset thereof, and add a single axiom or a set of axioms to the ontology with a one-click operation. The different possible statuses for the test outcomes are colour-coded, where `no evaluation' is left blank, `entailed' is highlighted in green, and the other statuses are highlighted in red.  Examples that illustrate the tool and a tutorial-style screencast are available from \url{https://
github.com/kierendavies/tddonto2}, as well as the source code and jar file.

Note that TDDonto2 is a proof-of-concept tool primarily to test the workings of the algorithms and investigate in more depth what the best components of a full TDD methodology would be, i.e., it has not been subjected to a full software development lifecycle and it does not yet cover all steps in a TDD methodology (e.g., it does not consider refactoring). 
Nonetheless, it serves to evaluate the TDD with TDDonto2 to obtain first indications whether there is any benefit to a {\em test-first} approach already.

\subsection{Editing efficiency evaluation}
\label{sec:editing}

Ontology authoring efficiency consists of two components: the number of clicks and keystrokes one has to carry out even when one is familiar with the interface 
and the repeated invocation of the reasoner with as one extreme case the reasoner invocation after each edit and the other one only at the end of all editing operations. The former depends mainly on the axioms and the interface design of the ontology editor and the latter is an orthogonal dimension that can be added based on the number of edits. 

Keystroke models are well-known in Human-Computer Interaction research  as a non-invasive way to determine how long a task will take with the software, which has been extended over the years to determine correlations between typing \& browsing speed and programming performance, the emotional state of a user, and biometric identification of users to detect hackers (see \cite{Kolakowska13,Thomas05} and references therein). 
For the editing efficiency evaluation of {\em test-first} TDD-based ontology authoring with TDDonto2 vs the {\em test-last}-based baseline, we are interested in task completion, and within that scope, to eliminate the noise of users so as to get a clear understanding of the `number of clicks and keystrokes' component and the effects of  reasoner invocation to determine the best performance in the most efficient possible situation. This then puts forward the hypothesis that:
\begin{itemize}
\item[\textbf{H1}] Given a time allocation to clicks and keystrokes and automated reasoning (classification), the overall editing time is lower for TDD in TDDonto2 than the case of the `expert user' and its test-last approach.
\end{itemize}
The `expert user' is idealised as someone who is an experienced ontology engineer, which is someone who has a high familiarity with the widely-used Prot\'eg\'e 5.x tool and who never browses to the wrong place nor gets sidetracked in the authoring process.  
That said, the evaluation method 
described in the next section easily can be amended for another ontology editor and a `novice factor' can be added\footnote{for instance, slower that average typing, more latency in scrolling and moving the mouse, and adding ``Mental Operator'' time penalties for switching contexts (including to the wrong ones, like the wrong tab), and keyboard/mouse input devices; see also \cite{Kolakowska13,Thomas05} for further possible variables and metrics.}.

\subsubsection{Materials and methods}

\paragraph{Methods}
To falsify, or validate, the hypothesis, we need a systematic and plausible scenario that can handle axiom input changes. The first aspect to establish is the type of edits. We assume a typical ontology that is not too lightweight, yet also does not use all corner cases of OWL 2 DL axioms and data about common axioms in an ontology. 
This resulted in the following set of 10 types of axioms, where $C$ and $D$ are named classes, $R$ is a simple object property, $c1$ an individual\footnote{There are obviously variations, but the main point here is the principle of how to approach computing an editing efficiency and what variables are relevant.}:
\begin{enumerate}[label=(\roman*)]  
\item simple class subsumption $C \sqsubseteq D$
\item simple existential (all-some) or simple universal $C \sqsubseteq \exists R.D$ or $C \sqsubseteq \forall R.D$
\item simple disjointness $C \sqcap D \sqsubseteq \bot$ or $C \sqsubseteq \neg D$
\item  domain $\exists R \sqsubseteq C$ 
\item range $\exists R^- \sqsubseteq D$
\item instance declaration $c1:C$
\item qualified cardinality constraint  $C \sqsubseteq\,\, \geq n\,\, R.D$ or  $C \sqsubseteq\,\, \leq n\,\, R.D$ or  $C \sqsubseteq\,\, = n\,\, R.D$
\item non-simple class on the left-hand side (lhs) (other than domain and range axiom)
\item `closure' axiom $C \sqsubseteq \exists R.D \sqcap \forall R.D$
\item arbitrary class expression on the rhs, conjunction/disjunction $C \sqsubseteq \exists R.D \sqcap \exists S.(E \sqcup F)$
\end{enumerate}
Let us assume for now that these 10 axioms types are the only ones,  
so as to show the principle of the calculations. 

The second dimension for the editing efficiency calculation is the presence/absence in the ontology of all the vocabulary elements used in the axioms, i.e., whether the terms have yet to be typed up or not. Third, whether all those vocabulary elements are top-level entities in the hierarchy (i.e., directly subsumed by {\tt owl:Thing} and {\tt topObjectProperty}) versus all entities are located at the  leaves in the deepest hierarchy in the ontology. One then can compute three core variants: {\em a)} the lower bound with existing vocabulary + top-level elements, {\em b)} the upper bound with new vocabulary + down in the hierarchy, and {\em c)} an average case by taking the average characters/term and hierarchy depth for a typical online ontology or computed for the specific ontology under evaluation. We opt for the latter option, i.e., we take a quantitative approach where pre-selected axiom types are filled with average vocabulary from an existing ontology.  
To do this, we need the following variables:
\begin{itemize}
\item $a = |character_{term}|$ where term is the class name or object property name;
\item $b = |clicks\,\, for\,\, hierarchy\,\, depth|$, be this the class or object property hierarchy;
\item $c = |character_{term}|$ where term is the instance name;
\item their respective averages, $\bar{a}$, $\bar{b}$, and $\bar{c}$, for some arbitrary ontology or for a specific ontology that is to be edited.
\end{itemize}

The number of clicks and keystrokes for a set of editing operations is then calculated as follows. Let $n$ be the  number of axioms to add, which are of a form as one of the 10 above and categorised as such. Then for each axiom $A_i$ ($1 \leq i \leq n$), one can calculate the minimum of clicks ($k_i^{min}$) and the maximum ($k_i^{max}$) so that the {\em best case scenario} is $\sum_{1}^{n} k_i^{min}$ and the  {\em worst case scenario} is $\sum_{1}^{n} k_i^{max}$. Alternatively, one can select an option for each axiom that mirrors one's habit or fix the option to systematically compare the same actions across ontologies. Because this is the first such evaluation, we choose the latter case.

Next, the way the number of clicks and keystrokes is calculated for each axiom specifically is as follows. 
We examined the editing efficiency calculations by using Prot\'eg\'e 5.2, which offers several ways of adding an axiom in most cases, under the assumption that the vocabulary is already present in the ontology. The first one for  axiom of type (i) is included here and the others are listed in Appendix~A: 
\begin{enumerate}[label=(\roman*)]  
\item 
click `Classes' tab (= 1), then either: 
\begin{enumerate}
\item drag class to position, if sufficiently nearby (existing classes) = 1
\item click class - click SubClass Of - in `class expression editor' type classname - click ok (existing classes) = $3 + a_C$ 
\item click class - click SubClass Of - in `class hierarchy' click as far down in the hierarchy as needed - select class - click ok (existing or new classes) = $4 + b_C$
\end{enumerate}
\end{enumerate}
The corresponding one for TDDonto2 is $a_C$ SubClassOf: $a_D = a_C + 11 + a_D  = 11 + a_C + a_D$.

Further, when typing the name of a vocabulary element or keyword, we consider the  autocomplete feature. For a term, this varies by the size and naming, but for keywords (e.g., {\sf\small SubClassOf:}) this is one character + a tab. To even this out, such an autocomplete is set on 4 keystrokes. For instance, the above calculation for $a_C$ SubClassOf: $a_D$ in TDDonto2 then becomes 4 + 4 + 4 = 12.

For the principal case to examine, we factor in time in two ways, 
with the first one being the determining one for falsifying hypothesis H1:
\begin{itemize} 
	\item reasoning time, by the worst case scenario for reasoning where after each axiom, the reasoner is invoked in the standard Prot\'eg\'e vs an ``evaluate all'' in TDDonto2. 
	\item allocating 1 second to each click and 0.3 second to each keystroke, which is based on the typing speed average of 190-200 characters per minute (and permutations thereof; see below); 
\end{itemize}	
For small ontologies that classify fast, the time-per-click is expected to be the major factor, whereas for lager or complex ontologies, the reasoning time is expected to be the major factor. The reasoning time is determined by classifying the ontology once, and then the worst case (reasoner invocation after each axiom) is computed by multiplying it by 9 as approximation of the total cost for Prot\'eg\'e and twice for TDDonto2 (at the start and after adding all axioms).

Lastly, the principal case will be checked against several permutations to assess robustness of results. They are as follows: (1) the same setting of 1s click and 0.3s keystroke but without autocomplete, (2) without autocomplete but slower clicking (2s) and faster typing (0.25s/keystroke), and (3) the same setting but 8 keystrokes for autocomplete cf. 4.

\paragraph{Materials}

We selected three actual ontologies to compute the average values and augmented it with three `mock' ontology averages. The actual ontologies are the African Wildlife Ontology (AWO), the Pizza ontology, and DataMining Optimization Ontology (DMOP). The AWO is a very small tutorial ontology (31 classes, 5 object properties, and 56 logical axioms, in $\mathcal{SRI}$ (i.e., OWL 2 DL)), which is used in the ontology engineering course of one of the authors\footnote{http://www.meteck.org/teaching/OEbook/ontologies/} and it also will be used for the user evaluation in the next section. The Pizza ontology \cite{Rector04} is deemed well-known; it has 98 classes, 8 object properties, and 786 logical axioms, and is in $\mathcal{SHOIN}$ (OWL DL) expressiveness. DMOP \cite{KLetal15} is medium-sized and complex (723 classes, 96 object properties, and 2425 logical axioms, in $\mathcal{SROIQ}(D)$, i.e., OWL 2 DL). It was chosen because of its size and complexity and because two of the authors were involved in its development and thus would be able to analyse some internals if the results would demand for it. In addition, we added three mock ontologies such that the parameters have different values from AWO, Pizza, and DMOP such that the effects of the variables' values for reasoner time, length of names, and hierarchy depth can be examined further. Their relevant characteristics are included in Table~\ref{tab:avgdata}. 
Given that we calculate with averages, the actual number of classes etc in M1-M3 do not matter, nor does the DL fragment or OWL species as indication of possible reasoning times, for that is also a fixed value for each mock ontology, which was set to be in-between Pizza's insignificant classification time on the one end and DMOP's 20 minutes at the other end. While there are larger ontologies with longer classification times, 20 minutes is already substantial for the authoring process and much longer than the overhead of the TDDonto2 tool to compute the outcome of the tests.

The classification times for AWO, Pizza, and DMOP were recorded on a MacBook Pro with 2.7 GHz Intel Core i5 and 16 GB memory.

\begin{table}[t]
\caption{Classification time in seconds (``Classif. (s)''), average number of characters for names ($a$) of class ($_C$) and object properties ($_{OP}$) and hierarchy depth ($b$), and instances ($c$) used in the editing efficiency computation. M1-M3: mock ontologies where values are set to assess their effects on editing efficiency.}
\label{tab:avgdata}
\begin{tabular}{|l|c|c|c|c|c|c|}
 \hline
 & AWO & Pizza & DMOP & M1 & M2 & M3\\ \hline
Classif. (s)	&	0.81	 & 0.1 &	1196.53 & 100 & 500 & 25 \\ \hline
$a_C$	&	7.06	 & 13.07 &	21.09 & 15 & 15 & 23 \\ \hline
$b_C$	&	2	&4.86	& 8.39 & 6 & 12 & 6 \\ \hline
$a_{OP}$	 &	9.4	& 11.63 &	14.14 & 12 & 12 & 15 \\ \hline
$b_{OP}$	 &	1.2 &	1.5 &	2.2 & 2 & 3 & 1.5 \\ \hline
    $c$    &	0 &	6.4 &	19.03 & 10 & 10  & 19 \\ \hline
\end{tabular}
\end{table}

\begin{figure*}[t]
\centering
\includegraphics[width=1.0\textwidth]{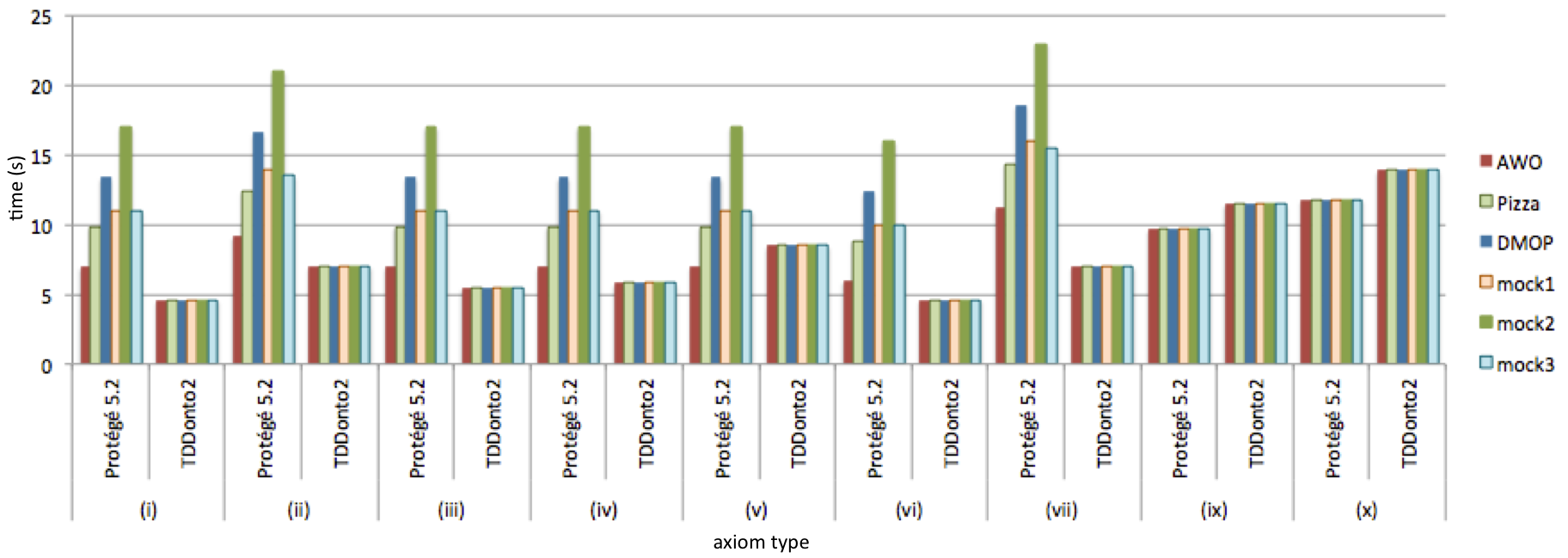}
\caption{Illustrative typical results of the editing efficiency data for Prot\'eg\'e v5.2 and TDDonto2. The times increase with increasing vocabulary name size and hierarchy depth averages (see Table~\ref{tab:avgdata}).}\label{fig:clickdetails}
\end{figure*}

\subsubsection{Results}

The aggregate results in editing efficiency time with and without reasoner is shown in Figure~\ref{fig:clickSample}. 
In order to falsify or validate hypothesis H1 on test-last vs test-first, compare ``Total Prot\'eg\'e - single edit reasoner'' with ``Total TDDonto2 - with reasoner'': it is obvious that test-first is somewhat to much faster than test-last, ranging from 13s in case of the AWO to a staggering 8430s with the DMOP. Thus, H1 is validated.

Comparing the first two data series (i.e., without reasoner) to the latter two (with reasoner), it is clear that the reasoner has most effect on overall editing time for the medium to large ontologies (DMOP and mock2). For the two small ontologies, AWO and Pizza, it turns out that the click \& keystroke time is the major contributor to the overall time taken. 
AWO's clicks amount to 75.9s in Prot\'eg\'e, so with a single classification being 0.81s, it only reaches a total of 83.19s for Prot\'eg\'e's worst case (invoke reasoner after each edit), which is 68.4s and 70.02s, respectively for TDDonto2 with its two invocations of the reasoner. 

\begin{figure}[h]
\centering
\includegraphics[width=0.48\textwidth]{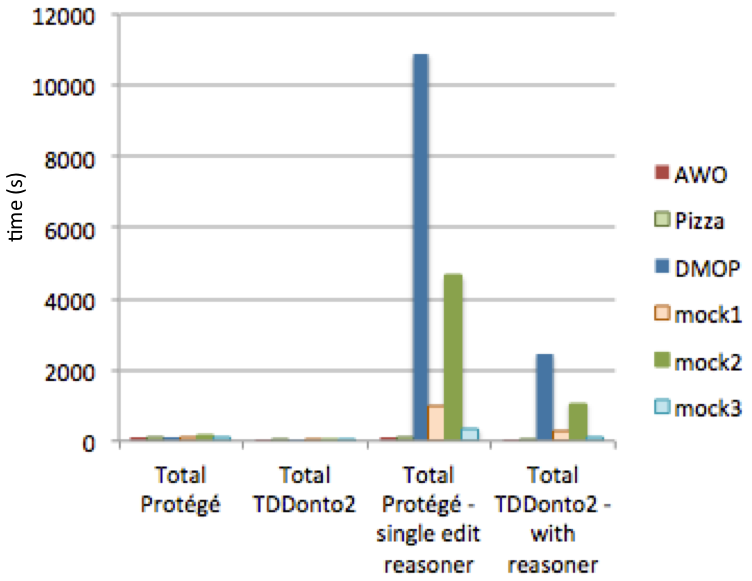}
\caption{Aggregate results for the Prot\'eg\'e v5.2 and TDDonto2 editing efficiency with and without factoring the reasoning time, when an instance of each axiom type is tested/added once.}\label{fig:clickSample}
\end{figure}

\begin{figure}[h]
\centering
\includegraphics[width=0.37\textwidth]{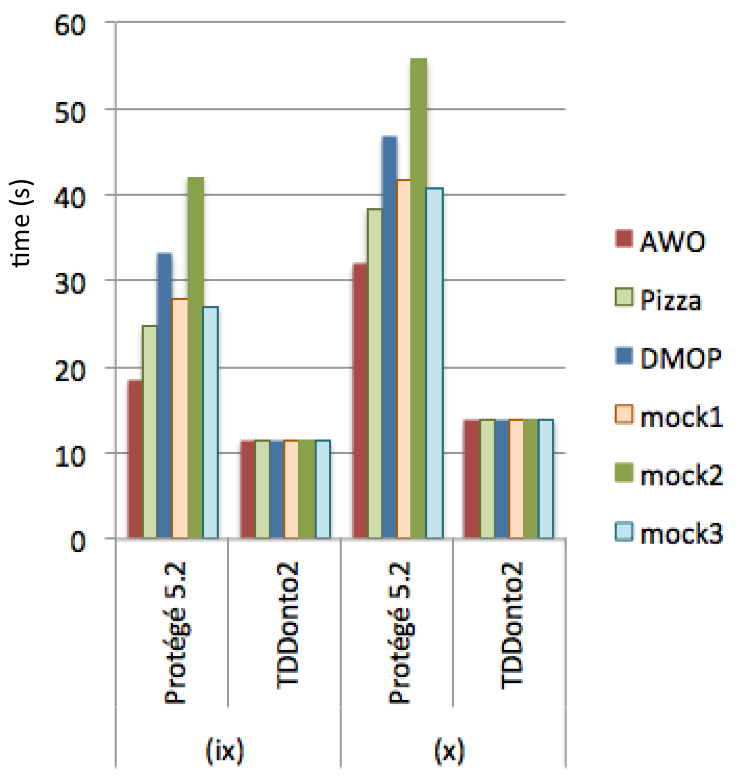}
\caption{Scenario (a) for axiom types (ix) and (x), i.e., where not the class expression editor is used (as in Figure~\ref{fig:clickdetails}), but the browsing and clicking interface instead. Note the different range of values on the y-axis cf. Figure~\ref{fig:clickdetails}.}\label{fig:clickdetails9and10}
\end{figure}

While the test-first is a clear winner, the difference is small for very small ontologies and the interface itself seems to have a bigger impact. Therefore, let us disaggregate the click-and-keystroke time by type of axiom. Then, the results are still favourable for TDDonto2---i.e., less time---except for a small difference for axiom type (ix) and (x); see Figure~\ref{fig:clickdetails}. 
We carried out a statistical analysis for each axiom type, where the null-hypothesis is that there is no difference. As the TDDonto2 series' data are not normally distributed, we used Wilcoxon for paired data, 2-tailed, and significance level 0.05. This was calculated over the data of the 4 scenarios pooled together so as not to cherry-pick (and 6 data points is insufficient for Wilcoxon). It was statistically significant for all axiom types except for type 5 (see online supplementary data).

There are several other noteworthy observations from this interface-only data. First, unlike with Prot\'eg\'e, the editing efficiency with TDDonto2 is immune to the vocabulary size names and hierarchy depths. This is partially thanks to the autocomplete feature. Second, there is an obvious gradient in the Prot\'eg\'e data. Given the three mock ontologies that varied class depth and vocabulary name size, it demonstrates that the hierarchy depth has the most negative effect, given mock2's $b_C=12$ and lower vocabulary name size cf DMOP, yet having higher values overall, and cf. mock1 and mock3 that have similar results but only the low hierarchy depth value remained the same.

Note also that the values for axiom types (ix) and (x) are similar and now also invariant for Prot\'eg\'e, because of the possible options to add it to the ontology and the task execution scenario selected was that of the class expression editor rather than firing the steps for axiom type (ii) twice and related auxiliary clicks (see Appendix for details). That is, Prot\'eg\'e now also requires the user to type the characters of the axiom, which is similar to the TDDonto2 interface. Using their respective scenario (a) with clicking, then the differences with TDDonto2 are the largest, as shown in Figure~\ref{fig:clickdetails9and10}, for it is a compounding effect adding up from the simpler axiom types. As they rely on axiom type (ii), which was already statistically significantly faster with TDDonto2, then so it is for axiom types (ix) and (x).

The comparison with other scenarios that vary click and keystroke times and autocomplete indicates robustness of results. That is, in the first alternate scenario (no autocomplete), Prot\'eg\'e has slightly lower values for most axiom types. In the second  and third scenarios (slower clicks and less autocomplete, respectively), TDDonto2 has lower values for most axiom types and they show a similar pattern as in Figure~\ref{fig:clickdetails} where axiom types (i)-(vii) are better for TDDonto2 and (ix)-(x) slightly better with Prot\'eg\'e (data not included). Thus, on the micro-level of the axiom, there is some editing difference between Prot\'eg\'e 5.2 and TDDonto2 for (very) small ontologies and an expert user, which becomes larger in favour of TDDonto2 for medium-sized and large ontologies. The latter is even more pronounced when taking into account the reasoner.

\subsubsection{Discussion}

It may be clear from the materials and methods section that there can be many parameters to assess editing efficiency,
that, perhaps, detracts one from the core result of test-first vs. test-last with the reasoner---the former having a higher editing efficiency. For the interface interaction,  
we made choices that seem reasonable to us, such as the autocomplete feature and the average case only 
and, to some extent, which of the alternate interface interaction scenarios was selected to be included\footnote{e.g., for (ix) and (x), the  clicking option could have been selected cf. the class expression editor, which resulted in the large difference with TDDonto2, as shown in Figure~\ref{fig:clickdetails9and10}. Besides that the authors would choose the latter option over firing the steps for axiom type (ii) twice, the class expression editor is the only option in case of a disjunction on the right-hand side, and therefore the class expression editor had been selected upfront.}. 
This serves the investigation into simulating a set of expert users in tool evaluations as well as teasing out parameters for non-human quantitative evaluations of ontology authoring tasks. In the case of the common Prot\'eg\'e 5.2 interface vs TDDonto2, the latter generally comes out favourably in several scenarios. 
This is especially so for medium and large-sized ontologies, which is further amplified with greater class hierarchy depth. 
In addition, if a ratio of axiom types would have been selected to be included in the calculations, rather than one of each, it would be mostly of type (ii), where TDDonto2 is distinctly more efficient, so this beneficial effect would thus be amplified further. 
Thus, ontologies within OWL 2 EL expressiveness characteristics (tailored to large, `simple' TBoxes), such as a SNOMED CT, would benefit most from TDD and its axiom-level input in TDDonto2.
This would be even more so for the scenario where new vocabulary has to be added, for the additional typing has a comparatively larger effect on the predominantly clicking-based Prot\'eg\'e interface with the existing vocabulary scenario we opted for. Finally, one may argue that most ontology developers are not expert efficient users and the editing efficiency calculations are artificial. Therefore, we shall return to this factor in the next section with the user evaluation.

The set-up has been lenient on the automated reasoner. This is because the aim was to obtain a general sense of any possible benefit in the authoring process, rather than aiming for improvements in the seconds. Practically, reasoning time may increase greatly as a result of adding an axiom. A quantitative approach would amount to randomised adding of axioms, so it would be an unpredictable effect. Reasoner performance {\em per s\'e} is not the scope of the paper, however, and therefore we deemed the approximation of 9x the baseline acceptable, rather than adding more than the baseline. Further, the mode of approximation taken is  favourable for the test-last setting rather than TDD, for the former requires more often the invocation of the reasoner. That TDD emerged positively already suggests it will be even more so in praxis.

\subsection{User study}
\label{sec:userstudy}

The editing efficiency evaluation reported on in the previous section assumed an optimal user working on an average ontology. Here, we are evaluating actual novice users on a relatively small ontology. We run this user study to test the following claims:
\begin{itemize}
\item[\textbf{C1}] Users will complete a \textbf{larger part of the task} when using the test-first TDDonto2 Prot\'eg\'e plugin than when using the test-last basic ontology editor interface  of Prot\'eg\'e.
\item[\textbf{C2}] Users will make \textbf{fewer mistakes} when using the TDDonto2 Prot\'eg\'e plugin than when using the basic ontology editor interface.
\item[\textbf{C3}] Users will be able to complete the tasks in \textbf{less time} when using the TDDonto2 plugin than when using the basic ontology editor interface.
\end{itemize} 

\subsubsection{Materials and methods}

The study included two general subtasks (assessing TBox axioms and ABox axioms) in one domain: African wildlife.  

The subjects were master students studying computer science, who have basic knowledge of ontologies and semantic technologies.
The users had been informed that the purpose of the user study was to evaluate using a tool for introducing knowledge into an ontology, a plugin to the Prot\'eg\'e editor TDDonto2, compared to using the existing Prot\'eg\'e interface without the plugin.
The users were presented with a demo of TDDonto2\footnote{accessible via \url{https://github.com/kierendavies/tddonto2}} 
and an ontology to be extended.

The general task was to assess the status of each axiom from the two sets (TBox axioms and ABox axioms) with respect to the ontology. 
The subjects could select the status from the following set: $\{$\textsf{entailed}, \textsf{absent}, \textsf{incoherent}, \textsf{inconsistent}$\}$, and they were provided textual definitions of each of the statuses.  We measured what percent of the statuses of the axioms the subjects were able to assess within given time (completeness) and what percent of their assessments were correct (correctness).

The experiment procedure was as follows:

\emph{For the given set of axioms, follow these steps:
\begin{compactitem}[-]
\item Register the current time (in the field provided).
\item Select the status of the given axiom (``Entailed'', ``Absent'', ``Incoherent'', ``Inconsistent'') for each axiom from the set,
\item Enter the axioms for which you have marked ``Absent" into the ontology and save the ontology file.
\item Copy the ontology source file to the given field.
\item Register the current time (in the field provided). 
\end{compactitem}
}
The experiment was designed as a \emph{within subject} study. 
The subjects were divided into two groups corresponding to the attendees of two class labs. 
There were 25 subjects altogether: 13 in the first group (which we denote Group A), and 12 in the second group (which we denote Group B).  
They were assigned four tasks:
\begin{compactitem}
\item Task $T1$: Testing the introduction of axioms regarding classes --  without TDDonto2 plugin,
\item Task $A1$: Testing the introduction of axioms regarding instances --  without TDDonto2 plugin,
\item Task $T2$: Testing the introduction of axioms regarding classes -- with TDDonto2 plugin,
\item Task $A2$: Testing the introduction of axioms regarding instances --  with TDDonto2 plugin.
\end{compactitem}

In Group A, the subjects first used the basic editor interface to complete the tasks and then TDDonto2. 
In Group B, the subjects first used TDDonto2 to complete the tasks and then the basic editor interface.
The time to complete tasks $T1$ and $T2$ (TBox axioms) was limited to 20 minutes per each task, and the time to complete tasks $A1$ and $A2$ (ABox axioms) was limited to 5 minutes for each task. 
Furthermore, to avoid the issue of a transfer effect (i.e., not repeating the errors the second time the subjects do the task) we prepared two different but comparable axiom sets\footnote{The tested axioms are at aforementioned URL (fn.~10)}.

\subsubsection{Results and discussion}

Table~\ref{tab:studystatistics} shows the data and statistics on the results of the user study, corresponding to claims C1-C3, with a breakdown by groups and the interfaces used. 
Regarding claim C1 (see Table~\ref{tab:studystatistics}a), we can see that, on average, more subjects completed the tasks when using TDDonto2 (94\% of task completeness versus 90\% regarding the basic interface), which supports our claim.
Regarding claim C2 (see Table~\ref{tab:studystatistics}b), relatively more subjects (88\%) correctly completed the tasks when using TDDonto2, while the percent of correct answers was lower in case of using the basic editor (62\%), which supports claim C2.
Regarding claim C3 (see Table~\ref{tab:studystatistics}c), we can see that the average time of completing the task was more than 2 times shorter when using TDDonto2 than when using the basic editor, more precisely it constituted 44\% of the time of completing the tasks when using the basic editor, which supports our claim C3.

\begin{table}[tb]
\caption{Statistics corresponding to claims C1, C2, C3. \label{tab:studystatistics}}
    \begin{subtable}[h]{1.0\columnwidth}
        \centering
\small
\begin{center}
\caption{Degree of completing the tasks per each group.}
\begin{tabular}{|c|p{1.8cm}|p{1.8cm}|p{1.5cm}|} 
\hline
Task & Group A  & Group B  & Total\\
 & (basic editor first) & (TDDonto2 first) & \\
\hline
Basic editor & & &\\
\hline
TBox1  & 13 (100\%)  & 10 (83\%) &\\
ABox1  & 13 (100\%)  & 9 (75\%) & 90\%\\
\hline
TDDonto2 & & &\\
\hline
TBox2  & 13 (100\%)  & 12 (100\%) &\\  
ABox2 & 12 (92\%)  & 10 (83\%) & 94\%\\
\hline
Total & 98\%  & 85\% & \\
\hline
\end{tabular}
\end{center}
    \end{subtable}
    \vspace{2mm}

    \begin{subtable}[h]{1.0\columnwidth}
        \centering
\small
\begin{center}
\caption{Correctness of completing the tasks per each group. 
}
\begin{tabular}{|c|p{1.8cm}|p{1.8cm}|p{1.5cm}|} 
\hline
Task & Group A  & Group B  & Total \\
 & (basic editor first) & (TDDonto2 first) & \\
\hline
Basic editor & & & \\
\hline
TBox1  & 61/117 (52\%)  & 63/90 (70\%) &  \\
ABox1  &  26/39 (67\%)  & 20/27 (74\%) & 62\%\\
\hline
TDDonto2 & & &\\
\hline
TBox2  &  98/117 (84\%)  & 96/108 (89\%) & \\  
ABox2  &  31/36 (86\%)  &  30/30 (100\%) & 88\%\\
\hline
Total & 70\%  & 82\% & \\
\hline
\end{tabular}
\end{center}

    \end{subtable}
        \vspace{2mm}

    \begin{subtable}[h]{1.0\columnwidth}
        \centering
\small
\begin{center}
\caption{Average time (in minutes) of completing the tasks per each group.}
\begin{tabular}{|c|p{1.8cm}|p{1.8cm}|p{1.5cm}|} 
\hline
Task & Group A  & Group B & Total \\
 & (basic editor first) & (TDDonto2 first) &\\
\hline
Basic editor & & &\\
\hline
TBox1  & 19.00 min & 19.19 min & \\
ABox1  &  4.92 min & 4.67 min & 12.27 min\\
\hline
TDDonto2 & & &\\
\hline
TBox2 &  8.15 min  & 7 min & \\  
ABox2 &  2.86 min  &  2.60 min & 5.38 min\\
\hline
Total &  8.97 min & 8.56 min& \\
\hline
\end{tabular}
\end{center}
    \end{subtable}
\end{table}

Figure~\ref{fig:corrperstatus} shows the comparison of correctness of completing the tasks when using the basic interface, and TDDonto2, disaggregated by the types of axiom statuses.

\begin{figure*}[t]
    \centering
    \begin{subfigure}[th]{0.45\textwidth}
        \includegraphics[width=0.95\columnwidth]{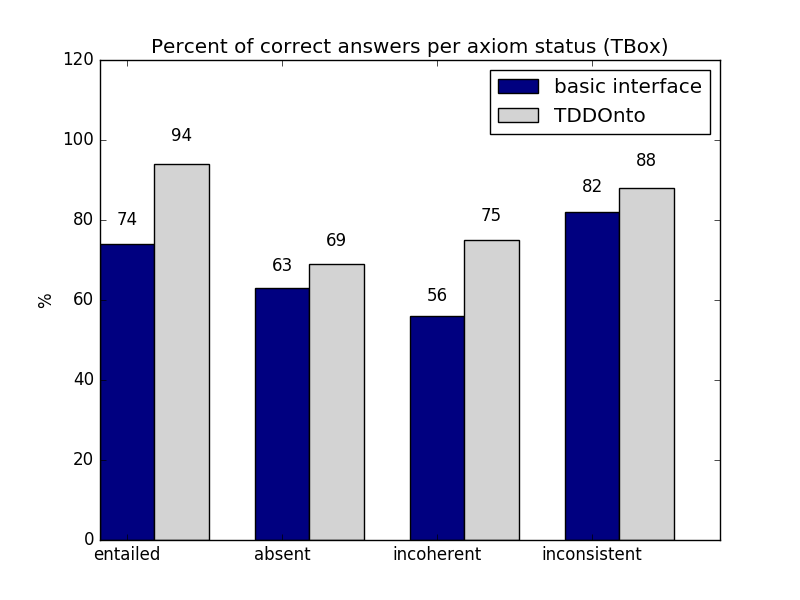}
        \caption{TBox}
    \end{subfigure}%
    \begin{subfigure}[th]{0.45\textwidth}
        \includegraphics[width=0.95\columnwidth]{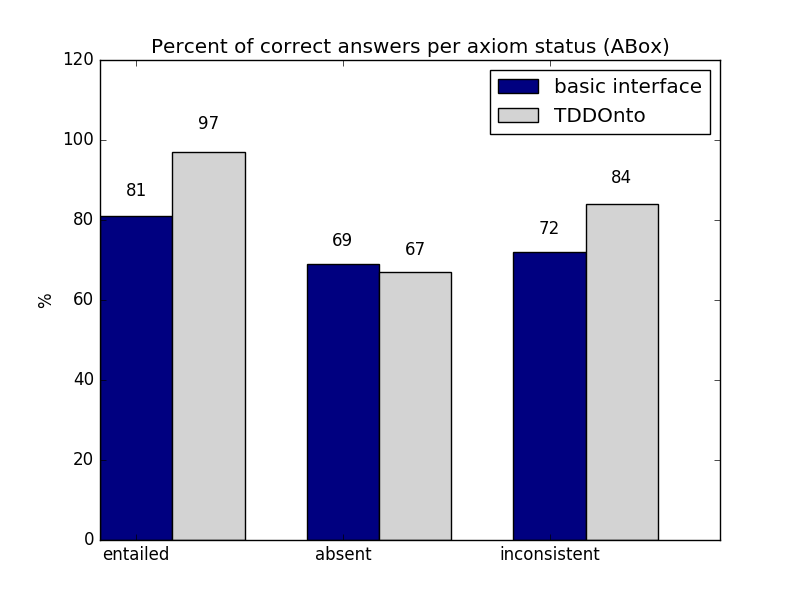}
        \caption{ABox}
    \end{subfigure}
    \caption{A breakdown of the correctness results---basic interface vs TDDonto2---with respect to status types. \label{fig:corrperstatus}}
\end{figure*}

We have performed a t-test for statistical significance of correctness percentage results.
We generated two sets for three settings (per TBox, per ABox, and per TBox plus ABox): a set with overall correctness percentage per each user per each task for the basic interface and a set with overall correctness percentage per each user per each task for TDDonto2.  
The null hypothesis was that relevant two sets of correctness percentages (the results for the basic interface and for TDDonto2) had identical mean (expected) values.
The results for the TBox experiments are: statistic=-3.3497, $p$=0.0015.
The results for the ABox experiments are: statistic=-2.8108, $p$=0.0072.
The overall result (TBox plus ABox) is as follows: statistic=-4.3337, $p$=3.55e-05.
Since the value of $p$ in all the cases is below 0.05, we reject the null hypothesis and conclude that the difference in the correctness results between the basic interface version of the experiments and the version when the subjects used TDDonto2 is statistically significant.

Finally, note that in the editing efficiency evaluation, editing in the AWO was about the same for Prot\'eg\'e and TDDonto2, for an assumed efficient expert user. Yet, these results with actual users demonstrate clearly that in praxis one can already observe TDD benefits even in these settings of small ontologies already.

\section{Discussion}
\label{sec:disc}

TDD as test-first approach to ontology authoring has been shown to be theoretically and technologically a viable option, and the first user study indicated that it is also beneficial for the authoring process from a user perspective. This clearly can be embedded in a broader process of ontology engineering, as the proposed lifecycle in \cite{KL16} already suggested. This can be extended further to also include goal or behaviour-driven development, which \textsc{Scone} \cite{Scone:Bitbucket} aims at, and conversion of competency questions into axioms that would feed into the technical TDD component presented in this paper by, e.g., linking it  to \cite{Dennis17}. The TDD component of regression testing---verifying past tests still pass---also may be an avenue for future works. Overall, these additions change the general flow of a TDD test of Figure~\ref{fig:tddontoIdea} into the one shown in Figure~\ref{fig:tddonto2}. Also in this case, more scenarios are possible than shown, so as not to obscure the general idea. For instance, after resolving conflicts when a precondition fails, one may not want the axiom in the ontology anymore. The current version of the TDDonto2 tool can cater for these variants, but it has no explicit interface features for them at present and it is left to the modeller's decisions.

\begin{figure}[t]
\includegraphics[width=0.48\textwidth]{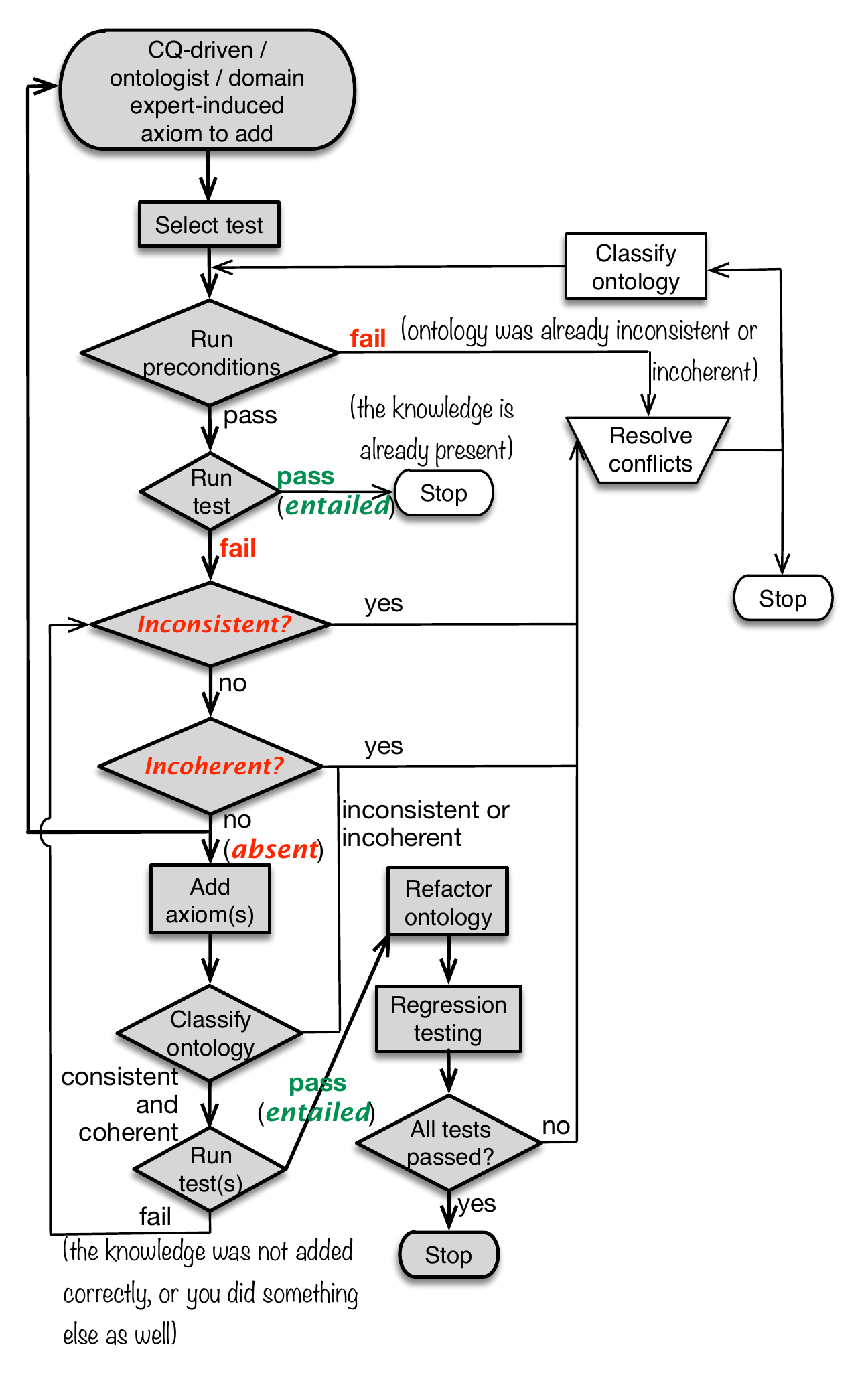}
\caption{Main steps of the general flow for the base case for a single TDD test, incorporating the model for testing.}\label{fig:tddonto2}
\end{figure}

Concerning theoretical and feature advances, the algorithms presented in Section~\ref{sec:algorithms} are the first ones with a broad coverage of OWL 2 language features, superseding those presented in all related work \cite{KL16,Scone:Bitbucket,Warrender15} especially on GCIs and the ABox, and also proving correctness of encoding. In addition, the model of testing axioms goes beyond the pass/fail/unknown and reporting missing vocabulary of the related work (which it does, too), by providing other possible outcomes that clarify what sort of a `fail' it is. This is a distinct feature for the setting of ontologies cf. software engineering, where a fail simply means ``not present, to implement'': the `fail/not present' may be because of absence due to lack of coverage, indeed, but also may be because adding it would cause inconsistency or incoherence, which is something one would want to know to determine the next step in the ontology authoring process. That is, unlike in software development, a `fail' does not necessarily imply `to add'.

We did make certain design decisions for this TDD that one may want to experiment with aside from the choice of technology\footnote{Note that alternatives to using the reasoner directly have been investigated, notably BGP with SPARQL-OWL \cite{Kollia11} and instance-based approach with mock objects, but exploiting the OWL reasoner turned out to be the fastest \cite{KL16,LK16}.}. For instance, once \textproc{isEntailed} is implemented by most or all reasoners, one could choose to update some of the algorithms accordingly. Also, one may also want to relax the coherency precondition. In our Model for Testing specification (Definition~\ref{def:testingmodel}), we sided with the somewhat `hardline' approach from a logician's viewpoint---a consistent theory, and every element satisfiable---compared to a possible tolerance for unsatisfiable classes at some point in the authoring stage. Anecdotally, we have seen behaviour along the line of  ``yes, I know x and y are inconsistent but I do not want to deal with them now''. Within the TDD scope, it would be preferable to remove them from the ontology, and add them at least temporarily as TDD test in the test set. This possibility eliminates issues with cascading unsatisfiable classes, yet not somehow losing that knowledge that with the test specification has become more easily examinable and thus resolvable.

The overall time of authoring an ontology is reduced thanks to not invoking the reasoner for each edit, which many a developer does \cite{Vigo14}, yet still being able to evaluate what the outcome would be if the axiom were to be added to the ontology. It does not reduce the reasoning time for the first classification, nor after actually having modified the ontology. Such efficiency improvements are reasoner improvements (e.g., using incremental reasoning), whereas here we focus on authoring improvements.

\section{Conclusions}
\label{sec:concl}

The novel test-driven development algorithms introduced in this paper fill a gap in rigour and coverage of both types of axioms that can be tested with a test-first approach and it provides more feedback to the modeller by means of its model of testing. The evaluation of this test-driven development in TDDonto2 with a novel human-independent assessment  approach for editing efficiency demonstrated that it is faster than the typical ontology authoring interface (Prot\'eg\'e 5.2) to some extent for smaller ontologies and even more so for medium to large ontologies with a stylised expert modeller, especially when automated reasoning is factored into the authoring process. Further, the user evaluation demonstrated that it is also more effective in task completion, time, and correctness (quality) for smaller ontologies and relative novice users. Thus, TDD's test-first approach with TDDonto2 is more effective than the common test-last authoring approach with Prot\'eg\'e.

The results demonstrate promise of test-driven development as an ontology development methodology. To turn it in a complete methodology, other components can be investigated, such as the refactoring step and the interaction with competency questions.

\subsection*{Acknowledgments} 
This work was partly supported by the Polish National Science Center (Grant No 2014/13/D/ST6/02076).




\begin{thebibliography}{35}
\providecommand{\natexlab}[1]{#1}
\providecommand{\url}[1]{\texttt{#1}}
\providecommand{\urlprefix}{URL }
\expandafter\ifx\csname urlstyle\endcsname\relax
  \providecommand{\doi}[1]{doi:\discretionary{}{}{}#1}\else
  \providecommand{\doi}{doi:\discretionary{}{}{}\begingroup
  \urlstyle{rm}\Url}\fi
\providecommand{\eprint}[2][]{\url{#2}}
\providecommand{\BIBand}{and}
\providecommand{\bibinfo}[2]{#2}
\ifx\xfnm\undefined \def\xfnm[#1]{\unskip,\space#1}\fi
\bibitem[{Garcia et~al.(2010)Garcia, O'Neill, Garcia, Lord, Stevens, Corcho
  et~al.}]{Garcia10}
\bibinfo{author}{Garcia\xfnm[ A.]}, \bibinfo{author}{O'Neill\xfnm[ K.]},
  \bibinfo{author}{Garcia\xfnm[ L.J.]}, \bibinfo{author}{Lord\xfnm[ P.]},
  \bibinfo{author}{Stevens\xfnm[ R.]}, \bibinfo{author}{Corcho\xfnm[ O.]},
  et~al.
\newblock \bibinfo{title}{Developing ontologies within decentralized settings}.
\newblock In: \bibinfo{editor}{Chen\xfnm[ H.]}, et~al., editors.
  \bibinfo{booktitle}{Semantic e-Science. Annals of Information Systems 11}.
  \bibinfo{publisher}{Springer}; \bibinfo{year}{2010}, p.
  \bibinfo{pages}{99--139}.
\bibitem[{Su\'arez-Figueroa et~al.(2008)Su\'arez-Figueroa, de~Cea, Buil,
  Dellschaft, Fern\'andez-Lopez, Garcia et~al.}]{Suarez08}
\bibinfo{author}{Su\'arez-Figueroa\xfnm[ M.C.]}, \bibinfo{author}{de~Cea\xfnm[
  G.A.]}, \bibinfo{author}{Buil\xfnm[ C.]}, \bibinfo{author}{Dellschaft\xfnm[
  K.]}, \bibinfo{author}{Fern\'andez-Lopez\xfnm[ M.]},
  \bibinfo{author}{Garcia\xfnm[ A.]}, et~al.
\newblock \bibinfo{title}{Ne{O}n methodology for building contextualized
  ontology networks}.
\newblock \bibinfo{type}{NeOn Deliverable} \bibinfo{number}{D5.4.1}; NeOn
  Project; \bibinfo{year}{2008}.
\bibitem[{Blomqvist et~al.(2012)Blomqvist, Sepour and Presutti}]{Blomqvist12}
\bibinfo{author}{Blomqvist\xfnm[ E.]}, \bibinfo{author}{Sepour\xfnm[ A.]},
  \bibinfo{author}{Presutti\xfnm[ V.]}.
\newblock \bibinfo{title}{Ontology testing -- methodology and tool}.
\newblock In: \bibinfo{booktitle}{18th International Conference on Knowledge
  Engineering and Knowledge Management (EKAW'12)}; vol. \bibinfo{volume}{7603}
  of \emph{\bibinfo{series}{LNAI}}. \bibinfo{publisher}{Springer};
  \bibinfo{year}{2012}, p. \bibinfo{pages}{216--226}.
\bibitem[{Peroni(2017)}]{Peroni17}
\bibinfo{author}{Peroni\xfnm[ S.]}.
\newblock \bibinfo{title}{A simplified agile methodology for ontology
  development}.
\newblock In: \bibinfo{editor}{M.\xfnm[ D.]}, \bibinfo{editor}{M.\xfnm[ P.V.]},
  \bibinfo{editor}{E.\xfnm[ J.R.]}, editors. \bibinfo{booktitle}{OWLED 2016,
  ORE 2016: OWL: Experiences and Directions - Reasoner Evaluation}; vol.
  \bibinfo{volume}{10161} of \emph{\bibinfo{series}{LNCS}}.
  \bibinfo{publisher}{Springer}; \bibinfo{year}{2017}, p.
  \bibinfo{pages}{55--69}.
\bibitem[{Ferr\'e and Rudolph(2012)}]{Ferre12}
\bibinfo{author}{Ferr\'e\xfnm[ S.]}, \bibinfo{author}{Rudolph\xfnm[ S.]}.
\newblock \bibinfo{title}{Advocatus diaboli -- exploratory enrichment of
  ontologies with negative constraints}.
\newblock In: \bibinfo{booktitle}{Proc. of EKAW'12}; vol.
  \bibinfo{volume}{7603} of \emph{\bibinfo{series}{LNAI}}.
  \bibinfo{publisher}{Springer}; \bibinfo{year}{2012}, p.
  \bibinfo{pages}{42--56}.
\newblock \bibinfo{note}{8-12 Oct 2012, Galway, Ireland}.
\bibitem[{Garca-Ramos et~al.(2009)Garca-Ramos, Otero and
  Fern\'andez-L\'opez}]{GarcaRamos09}
\bibinfo{author}{Garca-Ramos\xfnm[ S.]}, \bibinfo{author}{Otero\xfnm[ A.]},
  \bibinfo{author}{Fern\'andez-L\'opez\xfnm[ M.]}.
\newblock \bibinfo{title}{{OntologyTest}: A tool to evaluate ontologies through
  tests defined by the user}.
\newblock In: \bibinfo{editor}{Omatu\xfnm[ S.]}, et~al., editors.
  \bibinfo{booktitle}{10th International Work-Conference on Artificial Neural
  Networks, IWANN 2009 Workshops, Proceedings, Part II}; vol.
  \bibinfo{volume}{5518} of \emph{\bibinfo{series}{LNCS}}.
  \bibinfo{publisher}{Springer}; \bibinfo{year}{2009}, p.
  \bibinfo{pages}{91--98}.
\newblock \bibinfo{note}{Salamanca, Spain, June 10-12, 2009.}
\bibitem[{Vrande{\v{c}}i{\'c} and Gangemi(2006)}]{Vrandecic06}
\bibinfo{author}{Vrande{\v{c}}i{\'c}\xfnm[ D.]}, \bibinfo{author}{Gangemi\xfnm[
  A.]}.
\newblock \bibinfo{title}{Unit tests for ontologies}.
\newblock In: \bibinfo{booktitle}{On the Move to Meaningful Internet Systems
  2006: {OTM} 2006 Workshops}; vol. \bibinfo{volume}{4278} of
  \emph{\bibinfo{series}{Lecture Notes in Computer Science}}.
  \bibinfo{publisher}{Springer}.
\newblock ISBN \bibinfo{isbn}{978-3-540-48276-5}; \bibinfo{year}{2006}, p.
  \bibinfo{pages}{1012--1020}.
\bibitem[{Warrender and Lord(2015)}]{Warrender15}
\bibinfo{author}{Warrender\xfnm[ J.D.]}, \bibinfo{author}{Lord\xfnm[ P.]}.
\newblock \bibinfo{title}{How, what and why to test an ontology}.
\newblock \bibinfo{journal}{CoRR}
  \bibinfo{year}{2015};\bibinfo{volume}{abs/1505.04112}.
\newblock \eprint{1505.04112}; \urlprefix\url{http://arxiv.org/abs/1505.04112}.
\bibitem[{Ferr\'e(2016)}]{Ferre16}
\bibinfo{author}{Ferr\'e\xfnm[ S.]}.
\newblock \bibinfo{title}{Semantic authoring of ontologies by exploration and
  elimination of possible worlds}.
\newblock In: \bibinfo{booktitle}{Proc. of EKAW'16}; vol.
  \bibinfo{volume}{10024} of \emph{\bibinfo{series}{LNAI}}.
  \bibinfo{publisher}{Springer}; \bibinfo{year}{2016}, p.
  \bibinfo{pages}{180--195}.
\newblock \bibinfo{note}{19-23 November 2016, Bologna, Italy}.
\bibitem[{Denaux et~al.(2012)Denaux, Thakker, Dimitrova and Cohn}]{Denaux12}
\bibinfo{author}{Denaux\xfnm[ R.]}, \bibinfo{author}{Thakker\xfnm[ D.]},
  \bibinfo{author}{Dimitrova\xfnm[ V.]}, \bibinfo{author}{Cohn\xfnm[ A.G.]}.
\newblock \bibinfo{title}{Interactive semantic feedback for intuitive ontology
  authoring}.
\newblock In: \bibinfo{booktitle}{Proc. of FOIS'12}. \bibinfo{publisher}{IOS
  Press}; \bibinfo{year}{2012}, p. \bibinfo{pages}{160--173}.
\bibitem[{Matentzoglu et~al.(2016)Matentzoglu, Vigo, Jay and
  Stevens}]{Matentzoglu16}
\bibinfo{author}{Matentzoglu\xfnm[ N.]}, \bibinfo{author}{Vigo\xfnm[ M.]},
  \bibinfo{author}{Jay\xfnm[ C.]}, \bibinfo{author}{Stevens\xfnm[ R.]}.
\newblock \bibinfo{title}{Making entailment set changes explicit improves the
  understanding of consequences of ontology authoring actions}.
\newblock In: \bibinfo{booktitle}{Proc. EKAW'16}; vol. \bibinfo{volume}{10024}
  of \emph{\bibinfo{series}{LNAI}}. \bibinfo{publisher}{Springer};
  \bibinfo{year}{2016}, p. \bibinfo{pages}{432--446}.
\bibitem[{Keet et~al.(2013)Keet, Khan and Ghidini}]{KKG13forza}
\bibinfo{author}{Keet\xfnm[ C.M.]}, \bibinfo{author}{Khan\xfnm[ M.T.]},
  \bibinfo{author}{Ghidini\xfnm[ C.]}.
\newblock \bibinfo{title}{Ontology authoring with {FORZA}}.
\newblock In: \bibinfo{booktitle}{Proc. of CIKM'13}. \bibinfo{publisher}{ACM
  proceedings}; \bibinfo{year}{2013}, p. \bibinfo{pages}{569--578}.
\bibitem[{Vigo et~al.(2014)Vigo, Bail, Jay and Stevens}]{Vigo14}
\bibinfo{author}{Vigo\xfnm[ M.]}, \bibinfo{author}{Bail\xfnm[ S.]},
  \bibinfo{author}{Jay\xfnm[ C.]}, \bibinfo{author}{Stevens\xfnm[ R.D.]}.
\newblock \bibinfo{title}{Overcoming the pitfalls of ontology authoring:
  strategies and implications for tool design}.
\newblock \bibinfo{journal}{International Journal of Human-Computer Studies}
  \bibinfo{year}{2014};\bibinfo{volume}{72}(\bibinfo{number}{12}):\bibinfo{pages}{835--845}.
\bibitem[{Iqbal et~al.(2013)Iqbal, Murad, Mustapha and
  Sharef}]{Iqbal:Methodologies}
\bibinfo{author}{Iqbal\xfnm[ R.]}, \bibinfo{author}{Murad\xfnm[ M.A.A.]},
  \bibinfo{author}{Mustapha\xfnm[ A.]}, \bibinfo{author}{Sharef\xfnm[ N.M.]}.
\newblock \bibinfo{title}{An analysis of ontology engineering methodologies: A
  literature review}.
\newblock \bibinfo{journal}{Research Journal of Applied Sciences, Engineering
  and Technology}
  \bibinfo{year}{2013};\bibinfo{volume}{6}(\bibinfo{number}{16}):\bibinfo{pages}{2993--3000}.
\bibitem[{Keet and \L{}awrynowicz(2016)}]{KL16}
\bibinfo{author}{Keet\xfnm[ C.M.]}, \bibinfo{author}{\L{}awrynowicz\xfnm[ A.]}.
\newblock \bibinfo{title}{Test-driven development of ontologies}.
\newblock In: \bibinfo{booktitle}{Proc. of ESWC'16}; vol.
  \bibinfo{volume}{9678} of \emph{\bibinfo{series}{LNCS}}.
  \bibinfo{publisher}{Springer}; \bibinfo{year}{2016}, p.
  \bibinfo{pages}{642--657}.
\bibitem[{Beck(2004)}]{Beck04}
\bibinfo{author}{Beck\xfnm[ K.]}.
\newblock \bibinfo{title}{Test-Driven Development: by example}.
\newblock \bibinfo{publisher}{Addison-Wesley, Boston, MA};
  \bibinfo{year}{2004}.
\bibitem[{Scone Project(2016)}]{Scone:Bitbucket}
Scone Project.
\newblock \bibinfo{title}{Scone project}.
\newblock \bibinfo{howpublished}{\url{https://bitbucket.org/malefort/scone}};
  \bibinfo{year}{Accessed: 9-5-2016}.
\bibitem[{Motik et~al.(2009)Motik, Patel-Schneider and Parsia}]{OWL2rec}
\bibinfo{author}{Motik\xfnm[ B.]}, \bibinfo{author}{Patel-Schneider\xfnm[
  P.F.]}, \bibinfo{author}{Parsia\xfnm[ B.]}.
\newblock \bibinfo{title}{{OWL} 2 web ontology language structural
  specification and functional-style syntax}.
\newblock \bibinfo{type}{W3C Recommendation}; W3C; \bibinfo{year}{2009}.
\newblock \bibinfo{note}{Http://www.w3.org/TR/owl2-syntax/}.
\bibitem[{Davies et~al.(2017)Davies, Keet and Lawrynowicz}]{DKL17}
\bibinfo{author}{Davies\xfnm[ K.]}, \bibinfo{author}{Keet\xfnm[ C.M.]},
  \bibinfo{author}{Lawrynowicz\xfnm[ A.]}.
\newblock \bibinfo{title}{{TDDonto2}: A test-driven development plugin for
  arbitrary {TBox and ABox} axioms}.
\newblock In: \bibinfo{editor}{Blomqvist\xfnm[ E.]},
  \bibinfo{editor}{Hose\xfnm[ K.]}, \bibinfo{editor}{Paulheim\xfnm[ H.]},
  \bibinfo{editor}{Lawrynowicz\xfnm[ A.]}, \bibinfo{editor}{Ciravegna\xfnm[
  F.]}, \bibinfo{editor}{Hartig\xfnm[ O.]}, editors. \bibinfo{booktitle}{The
  Semantic Web: {ESWC} 2017 Satellite Events}; vol. \bibinfo{volume}{10577} of
  \emph{\bibinfo{series}{LNCS}}. \bibinfo{publisher}{Springer};
  \bibinfo{year}{2017}, p. \bibinfo{pages}{120--125}.
\newblock \bibinfo{note}{30 May - 1 June 2017, Portoroz, Slovenia}.
\bibitem[{Rafique and Mi{\v{s}}i{\'c}(2013)}]{Rafique:TDD}
\bibinfo{author}{Rafique\xfnm[ Y.]}, \bibinfo{author}{Mi{\v{s}}i{\'c}\xfnm[
  V.B.]}.
\newblock \bibinfo{title}{The effects of test-driven development on external
  quality and productivity: A meta-analysis}.
\newblock \bibinfo{journal}{IEEE Transactions on Software Engineering}
  \bibinfo{year}{2013};\bibinfo{volume}{39}(\bibinfo{number}{6}):\bibinfo{pages}{835--856}.
\newblock \doi{\bibinfo{doi}{10.1109/TSE.2012.28}}.
\bibitem[{Janzen(2006)}]{Janzen06}
\bibinfo{author}{Janzen\xfnm[ D.S.]}.
\newblock \bibinfo{title}{Software architecture improvement through test-driven
  development}.
\newblock In: \bibinfo{booktitle}{Companion to 20th ACM SIGPLAN Conference
  2005}. \bibinfo{publisher}{ACM Proceedings}; \bibinfo{year}{2006}, p.
  \bibinfo{pages}{240--241}.
\bibitem[{Cucumber(2016)}]{Cucumber}
Cucumber.
\newblock \bibinfo{title}{Cucumber}.
\newblock \bibinfo{howpublished}{\url{https://cucumber.io/}};
  \bibinfo{year}{Accessed: 1-11-2016}.
\newblock \urlprefix\url{https://cucumber.io/}.
\bibitem[{Sequeda and Miranker(2017)}]{Sequeda17}
\bibinfo{author}{Sequeda\xfnm[ J.F.]}, \bibinfo{author}{Miranker\xfnm[ D.P.]}.
\newblock \bibinfo{title}{A pay-as-you-go methodology for ontology-based data
  access}.
\newblock \bibinfo{journal}{IEEE Internet Computing}
  \bibinfo{year}{2017};\bibinfo{volume}{March /April
  2017}:\bibinfo{pages}{92--96}.
\bibitem[{Fernandez-Izquierdo(2017)}]{Fernandez17}
\bibinfo{author}{Fernandez-Izquierdo\xfnm[ A.]}.
\newblock \bibinfo{title}{Ontology testing based on requirements formalization
  in collaborative development environments}.
\newblock In: \bibinfo{editor}{Aroyo\xfnm[ L.]}, \bibinfo{editor}{Gandon\xfnm[
  F.]}, editors. \bibinfo{booktitle}{Doctoral Consortium at ISWC (ISWC-DC'17)};
  vol. \bibinfo{volume}{1962} of \emph{\bibinfo{series}{CEUR-WS}}.
  \bibinfo{year}{2017},\bibinfo{note}{Vienna, Austria, October 22nd, 2017}.
\bibitem[{Dennis et~al.(2017)Dennis, van Deemter~K., Dell'Aglio and
  Pan}]{Dennis17}
\bibinfo{author}{Dennis\xfnm[ M.]}, \bibinfo{author}{van Deemter~K.\xfnm[]},
  \bibinfo{author}{Dell'Aglio\xfnm[ D.]}, \bibinfo{author}{Pan\xfnm[ J.Z.]}.
\newblock \bibinfo{title}{Computing authoring tests from competency questions:
  Experimental validation}.
\newblock In: \bibinfo{editor}{d'Amato\xfnm[ C.]}, et~al., editors.
  \bibinfo{booktitle}{The Semantic Web - ISWC 2017}; vol.
  \bibinfo{volume}{10587} of \emph{\bibinfo{series}{LNCS}}.
  \bibinfo{publisher}{Springer}; \bibinfo{year}{2017}, p.
  \bibinfo{pages}{243--259}.
\bibitem[{{OWL} {API}(2016)}]{OWLAPI}
{OWL} {API}.
\newblock \bibinfo{title}{{OWL} {API}}.
\newblock \bibinfo{howpublished}{\url{http://owlcs.github.io/owlapi/}};
  \bibinfo{year}{Accessed: 1-11-2016}.
\bibitem[{Liebig et~al.(2011)Liebig, Luther, Noppens and Wessel}]{Liebig11}
\bibinfo{author}{Liebig\xfnm[ T.]}, \bibinfo{author}{Luther\xfnm[ M.]},
  \bibinfo{author}{Noppens\xfnm[ O.]}, \bibinfo{author}{Wessel\xfnm[ M.]}.
\newblock \bibinfo{title}{{OWLlink}}.
\newblock \bibinfo{journal}{Semantic Web Journal}
  \bibinfo{year}{2011};\bibinfo{volume}{2}(\bibinfo{number}{1}):\bibinfo{pages}{23--32}.
\bibitem[{Kollia et~al.(2011)Kollia, Glimm and Horrocks}]{Kollia11}
\bibinfo{author}{Kollia\xfnm[ I.]}, \bibinfo{author}{Glimm\xfnm[ B.]},
  \bibinfo{author}{Horrocks\xfnm[ I.]}.
\newblock \bibinfo{title}{{SPARQL Query Answering over OWL Ontologies}}.
\newblock In: \bibinfo{booktitle}{Proc. of ESWC'11}; vol.
  \bibinfo{volume}{6643} of \emph{\bibinfo{series}{LNCS}}.
  \bibinfo{publisher}{Springer}; \bibinfo{year}{2011}, p.
  \bibinfo{pages}{382--396}.
\bibitem[{Keet(2012)}]{Keet12ekaw}
\bibinfo{author}{Keet\xfnm[ C.M.]}.
\newblock \bibinfo{title}{Detecting and revising flaws in {OWL} object property
  expressions}.
\newblock In: \bibinfo{booktitle}{Proc. of EKAW'12}; vol.
  \bibinfo{volume}{7603} of \emph{\bibinfo{series}{LNAI}}.
  \bibinfo{publisher}{Springer}; \bibinfo{year}{2012}, p.
  \bibinfo{pages}{252--266}.
\newblock \bibinfo{note}{8-12 Oct 2012, Galway, Ireland}.
\bibitem[{Davies(2016)}]{Davies16}
\bibinfo{author}{Davies\xfnm[ K.]}.
\newblock \bibinfo{title}{Towards test-driven development of ontologies: An
  analysis of testing algorithms}.
\newblock \bibinfo{type}{Project Report}; University of Cape Town;
  \bibinfo{year}{2016}.
\newblock
  \bibinfo{note}{\url{http://projects.cs.uct.ac.za/honsproj/cgi-bin/view/2016/allie_davies.zip/build/}}.
\bibitem[{\L{}awrynowicz and Keet(2016)}]{LK16}
\bibinfo{author}{\L{}awrynowicz\xfnm[ A.]}, \bibinfo{author}{Keet\xfnm[ C.M.]}.
\newblock \bibinfo{title}{The {TDDonto} tool for test-driven development of
  {DL} knowledge bases}.
\newblock In: \bibinfo{booktitle}{Proc. of DL'16}; vol. \bibinfo{volume}{1577}
  of \emph{\bibinfo{series}{CEUR-WS}}.
  \bibinfo{year}{2016},\bibinfo{note}{22-25 April 2016, Cape Town, South
  Africa}.
\bibitem[{Ko\l{}akowska(2013)}]{Kolakowska13}
\bibinfo{author}{Ko\l{}akowska\xfnm[ A.]}.
\newblock \bibinfo{title}{A review of emotion recognition methods based on
  keystroke dynamics and mouse movements}.
\newblock In: \bibinfo{booktitle}{HSI 2013}. \bibinfo{publisher}{IEEE Xplore};
  \bibinfo{year}{2013}, p. \bibinfo{pages}{548--555}.
\newblock \bibinfo{note}{6-8 June 2013, Sopot, Poland}.
\bibitem[{Thomas et~al.(2005)Thomas, Karahasanovic and Kennedy}]{Thomas05}
\bibinfo{author}{Thomas\xfnm[ R.C.]}, \bibinfo{author}{Karahasanovic\xfnm[
  A.]}, \bibinfo{author}{Kennedy\xfnm[ G.E.]}.
\newblock \bibinfo{title}{An investigation into keystroke latency metrics as an
  indicator of programming performance}.
\newblock In: \bibinfo{booktitle}{Proceedings of the 7th Australasian
  conference on Computing education (ACE'05)}; vol.~\bibinfo{volume}{42}.
  \bibinfo{publisher}{Australian Computer Society}; \bibinfo{year}{2005}, p.
  \bibinfo{pages}{127--134}.
\bibitem[{Rector et~al.(2004)Rector, Drummond, Horridge, Rogers, Knublauch,
  Stevens et~al.}]{Rector04}
\bibinfo{author}{Rector\xfnm[ A.]}, \bibinfo{author}{Drummond\xfnm[ N.]},
  \bibinfo{author}{Horridge\xfnm[ M.]}, \bibinfo{author}{Rogers\xfnm[ L.]},
  \bibinfo{author}{Knublauch\xfnm[ H.]}, \bibinfo{author}{Stevens\xfnm[ R.]},
  et~al.
\newblock \bibinfo{title}{{OWL} pizzas: Practical experience of teaching
  {OWL-DL}: Common errors \& common patterns}.
\newblock In: \bibinfo{booktitle}{Proceedings of the 14th International
  Conference Knowledge Acquisition, Modeling and Management (EKAW'04)}; vol.
  \bibinfo{volume}{3257} of \emph{\bibinfo{series}{LNCS}}.
  \bibinfo{publisher}{Springer}; \bibinfo{year}{2004}, p.
  \bibinfo{pages}{63--81}.
\newblock \bibinfo{note}{Whittlebury Hall, UK}.
\bibitem[{Keet et~al.(2015)Keet, Lawrynowicz, d'Amato, Kalousis, Nguyen, Palma
  et~al.}]{KLetal15}
\bibinfo{author}{Keet\xfnm[ C.M.]}, \bibinfo{author}{Lawrynowicz\xfnm[ A.]},
  \bibinfo{author}{d'Amato\xfnm[ C.]}, \bibinfo{author}{Kalousis\xfnm[ A.]},
  \bibinfo{author}{Nguyen\xfnm[ P.]}, \bibinfo{author}{Palma\xfnm[ R.]}, et~al.
\newblock \bibinfo{title}{The data mining optimization ontology}.
\newblock \bibinfo{journal}{Web Semantics: Science, Services and Agents on the
  World Wide Web}
  \bibinfo{year}{2015};\bibinfo{volume}{32}:\bibinfo{pages}{43--53}.

\end{thebibliography}
%







\section*{Appendix}



The calculations of the interface clicks for Prot\'eg\'e 5.2 are as follows. Given that several options are typically possible, we select one, which is indicated with an asterisk at the end of the option. noting that it offers several ways of adding an axiom in most cases. 
\begin{enumerate}[label=(\roman*)] 
\item 
click `Classes' tab (= 1), then either: 
\begin{enumerate}
\item drag class to position, if sufficiently nearby (existing classes) = 1
\item click class - click SubClass Of - in `class expression editor' type classname - click ok (existing classes) = $3 + a_C$ 
\item click class - click SubClass Of - in `class hierarchy' click as far down in the hierarchy as needed - select class - click ok (existing or new classes) = $4 + b_C$ \hfill [$^*$]
\end{enumerate}

\item 
click `Classes' tab (=1), then either:
\begin{enumerate}
\item click class - click SubClass Of - in `class expression editor' type ``R some/only D'' - click ok = $3 + a_{OP} + 4 + a_D$
\item click class - click SubClass Of - in `Object restriction creator' click as far down in the property hierarchy as needed - select property - in `Object restriction creator' click as far down in the restriction filler as needed - select class - click restriction type some - click ok = $5 + b_{OP} + b_C$ \hfill [$^*$]
\end{enumerate}

\item 
click `Classes' tab (=1), then either:
\begin{enumerate}
\item click class - click Disjoint With - in `class expression editor' type classname - click ok (existing classes) = $3 + a_C$
\item click class - click Disjoint With - in `class hierarchy' click as far down in the hierarchy as needed - select class - click ok = $4 + b_C$ \hfill [$^*$]
\end{enumerate}

\item 
click `Object properties' tab (=1), then either: 
\begin{enumerate}
\item click property - click Domain - in `class expression editor' type classname - click ok = $3 + a_C$
\item click property - click Domain - in `class hierarchy' click as far down in the hierarchy as needed - select class - click ok = $4 + b_C$ \hfill [$^*$]
\end{enumerate}

\item 
has the same processes as for (iv). 

\item 
click `Individuals by class' tab (=1), 
then either:
\begin{enumerate}
\item click Types - in `class expression editor' type classname - click ok = $2 + a_C$
\item click Types - in `class hierarchy' click as far down in the hierarchy as needed - select class - click ok = $3 + b_C$ \hfill [$^*$]
\item click `Classes' tab (=1), then 
 click Instances - click instance - click ok = 3
\end{enumerate}

\item 
click `Classes' tab (=1), then either:
\begin{enumerate}
\item click class - click SubClass Of - in `class expression editor' type ``R min x D'' - click ok = $3 + a_R + 3 + 1 + a_D = 7 + a_R + a_D$
\item click class - click SubClass Of - in `Object restriction creator' click as far down in the property hierarchy as needed - select property - in `Object restriction creator- click as far down in the restriction filler as needed - select class - click restriction type - click/type cardinality - click ok = $7 + b_{OP} + b_C$ \hfill [$^*$]
\end{enumerate}

\item 
click `Active ontology' tab (=1), then  
\begin{enumerate}
\item click `General class axioms' - click add - type the entire GCI - click ok = 3 + GCI
\end{enumerate}

\item 
click `Classes' tab (=1), then either:
\begin{enumerate}
\item execute the clicks for axiom type (ii) twice
\item click add - click class expression editor, and type: some $a_R$  $a_D$ and only $a_R$  $a_D$ = 2+  4 + $a_R$ + $a_C$ + 3 + 4 + $a_R$ + $a_D$ =  13 + $2* a_R$ + $2* a_D$ \hfill [$^*$]
\end{enumerate}

\item 
click `Classes' tab (=1), then either: 
\begin{enumerate}
\item execute the clicks for axiom type (ii), then click add - click class expression editor and type: some $a_S$ ($a_E$ or  $a_F$) = [axiom type (ii) clicks] + 2 +  4 + $a_S$ + 1+ $a_E$ + 2 + $a_F$ + 1 = [axiom type (ii) clicks] + 10 + $a_S$ + $a_E$ +  $a_F$
\item  click add - click class expression editor, and type: some $a_R$  $a_D$ and some $a_S$ ($a_E$ or  $a_F$) = 2 + 4 + $a_R$ + $a_D$ + 3 + 4 + $a_S$ + 1+ $a_E$ + 2 + $a_F$ + 1 = 16 + $a_R$ + $a_D$ + $a_S$ +  $a_E$ + $a_F$ \hfill [$^*$]
\end{enumerate}
\end{enumerate}

The clicks formulae with TDDonto2 are as follows. In the TDDonto2 plugin, one only adds full GCIs/assertions and then the user has to click ``Add'' (=1), %
and then, for the 10 axiom types, in the same order:  
\begin{enumerate}[label=(\roman*)]  
\item $a_C$ SubClassOf: $a_D = a_C + 11 + a_D  = 11 + a_C + a_D$
\item $a_C$ SubClassOf: some $a_R$  $a_D = a_C + 11 + 4 + a_R + a_D = 15 + a_C + a_R + a_D$
\item $a_C$ and $D$ SubClassOf: owl:nothing {\em or} $a_C$ SubClassOf: not $a_D = a_C + 3 +  a_D + 11 + 11 =  a_C + a_D + 22$ or $a_C + 11 + 3 + a_D = a_C + a_D + 14$
\item some $a_R$ SubClassOf: $a_C  = 4 + a_R + 11 + a_C = 15 + a_R + a_C$
\item some (inverse($a_R$) SubClassOf: $a_D = 4 + 9 + a_R + 11 + a_D = 24 + a_R + a_D$ 
\item c1 Type: C = $c + 5 + a_C$
\item $a_C$ SubClassOf: $a_R$ min n $a_D  = a_C + 11 + a_R + 4 + a_D = a_C + 15 + a_R + a_D$
\item non-simple class on the lhs (other than domain and range axiom) = can be anything
\item $a_C$ SubClassOf: some $a_R$  $a_D$ and only $a_R$  $a_D$ = $a_C + 11 + 4 + a_R + a_D + 3 + 4 + a_R + a_D =  22 + a_C + 2*a_R + 2*a_D$
\item $a_C$ SubClassOf: some $a_R$  $a_D$ and some $a_S$ ($a_E$ or  $a_F$) = $a_C$ + 11  + 4 + $a_R$ + $a_D$ + 3 + 4 + $a_S$ + 1+ $a_E$ + 2 + $a_F$ + 1 = 26 + $a_C$ + $a_R$ + $a_D$ + $a_S$ +  $a_E$ + $a_F$.
\end{enumerate}

\end{document}